\documentclass{article}
\usepackage[hyperfootnotes=false]{hyperref}
\usepackage{xurl}

\usepackage[preprint]{icml2026}
\makeatletter
\renewcommand{\ICML@preprint}{\textit{Preprint.}}
\makeatother
\hypersetup{
  pdfsubject={Research preprint},
  pdfauthor={Shanfeng Huang, Rongqian Zhang, Zhou Fang, Song Xiao, Hai Du},
  pdfkeywords={Mixture-of-Experts, expert compression, pruning, merging, PRIME}
}

\usepackage{times}
\usepackage{latexsym}

\usepackage[T1]{fontenc}
\usepackage[utf8]{inputenc}

\usepackage{microtype}
\usepackage{graphicx}

\usepackage{subcaption}
\usepackage{booktabs}
\usepackage{amsmath}
\usepackage{amssymb}
\usepackage{mathtools}
\usepackage{amsthm}
\usepackage{enumitem}
\usepackage{xcolor}
\usepackage{multirow}
\usepackage[capitalize,noabbrev]{cleveref}
\usepackage{caption}

\theoremstyle{plain}
\newtheorem{theorem}{Theorem}[section]

\theoremstyle{definition}

\theoremstyle{remark}
\newtheorem{remark}[theorem]{Remark}

\usepackage[textsize=tiny]{todonotes}

\icmltitlerunning{PRIME: Frequency-Aware Unified Pruning and Merging for MoE Compression}

\begin{document}
\twocolumn[
  \icmltitle{PRIME: Frequency-Aware Unified Pruning and Merging for MoE Compression}
  \begin{icmlauthorlist}
    \icmlauthor{Shanfeng Huang}{baidu}
    \icmlauthor{Rongqian Zhang}{baidu}
    \icmlauthor{Zhou Fang}{baidu}
    \icmlauthor{Song Xiao}{baidu}
    \icmlauthor{Hai Du}{baidu}
  \end{icmlauthorlist}
  \icmlaffiliation{baidu}{Baidu Inc}
  \icmlcorrespondingauthor{Authors}{huangshanfeng@baidu.com}
  \vskip 0.3in
]
\printAffiliationsAndNotice{}

\begin{abstract}  Mixture-of-Experts (MoE) models have emerged as an effective paradigm for scaling large language models, yet their massive parameter footprint poses severe memory challenges for deployment. Existing compression methods typically treat expert pruning and merging as disjoint operations and rely on greedy heuristics, often leading to irreversible information loss or representational pollution. In this paper, we propose \textbf{PRIME}, a unified optimization framework for MoE compression via adaptive \textbf{PR}un\textbf{I}ng and \textbf{ME}rging. PRIME casts the discrete prune-merge assignment step as a global Integer Linear Programming over a frequency-weighted surrogate objective, jointly selecting pruning and merging decisions while accounting for expert usage. We motivate the frequency-aware objective through an error-bound analysis under fixed representative assignments, and show that the formulation favors high-frequency experts as centroids, promoting routing stability. Extensive experiments on Qwen3-30B-A3B and Qwen3.5-122B-A10B demonstrate that PRIME achieves strong accuracy retention across knowledge, coding, math, and long-context tasks, while serving benchmarks of merged models quantify the memory and throughput benefits of expert compression in vLLM.
\end{abstract}

\begin{figure*}[t]
    \centering
    \includegraphics[width=\textwidth]{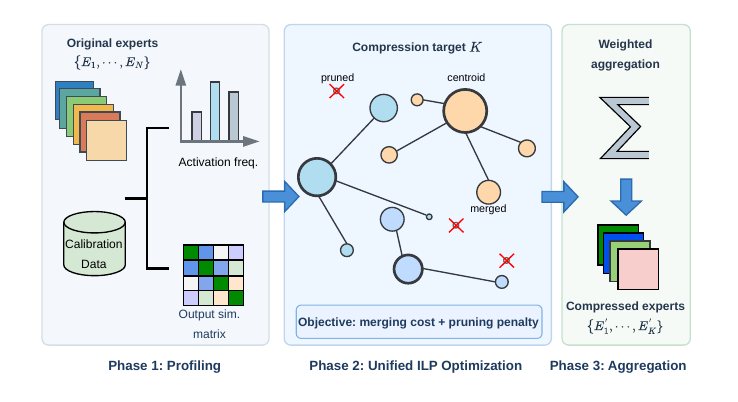}
    \caption{Overview of the PRIME framework. (1) Profiling: extract activation frequencies and pairwise distances. (2) Unified ILP: jointly determine prune-merge decisions. (3) Weighted aggregation: synthesize compressed experts via frequency-weighted averaging.}
\label{fig:framework}
\end{figure*}

\section{Introduction}
The rapid progress of Large Language Models (LLMs) has been strongly driven by Mixture-of-Experts (MoE) architectures, which decouple model capacity from inference cost through sparse activation. By activating only a sparse subset of experts per token, MoEs achieve performance comparable to dense models via conditional computation. However, this efficiency comes at the cost of immense memory footprints, as the full set of experts typically requires residence in memory. This bottleneck hinders the deployment of MoEs on commodity hardware and edge devices.

Consequently, compressing MoE models has become a critical research frontier.
Beyond conventional compression techniques widely used in dense models, such as quantization \citep{dettmers2022llmint8}, sparsification \citep{frantar2023sparsegpt} and low-rank factorization \citep{hu2022lora}, MoE architectures introduce unique opportunities for structural compression, e.g. expert pruning which removes under-utilized or redundant experts \citep{chen2022task, lu2024not, xie2024moepruner, huang2024mcmoe, guo2025cluster, lasby2025reap}, and expert merging which consolidates semantically similar experts into fewer, more compact ones \citep{li2023merge, miao2025mergemoe, chen2025hcsmoe, zhang2025diversifying}. However, current compression paradigms exhibit inherent limitations. On the one hand, existing optimization strategies are commonly bifurcated, as pruning and merging are regarded as mutually exclusive operations or are executed as sequentially decoupled stages, thereby precluding the exploration of a unified joint decision space. On the other hand, these approaches ignore expert-frequency effects by relying solely on similarity metrics and treating all experts equally during clustering, disregarding that distortions in high-frequency experts accumulate over tokens and cause much larger inference errors than even greater deviations in rarely used experts.

In this work, we propose PRIME, a frequency-aware unified pruning and merging framework for MoE compression as shown in Figure \ref{fig:framework}. The contributions are: \begin{itemize}[leftmargin=*, noitemsep, topsep=0pt, itemsep=0pt] 
\item \textbf{Unified Pruning-Merging via ILP.} We formulate pruning and merging decisions within a single ILP over a frequency-weighted assignment objective, obtaining the optimum of the stated surrogate rather than relying on greedy local decisions. 
\item \textbf{Frequency-Aware Error Motivation.} We derive an error-bound analysis that motivates weighting assignment costs by expert activation frequency. We further derive a sufficient condition showing that high-frequency experts are naturally favored as centroids, contributing to routing stability.
\item \textbf{Deployment Analysis.} Benchmarks of merged models on vLLM quantify the serving benefits of expert compression: reducing the expert count by 25\% lowers weight memory by 23.7\%, expands KV cache capacity by 2.20$\times$, and improves output throughput by 1.74$\times$ in a representative memory-constrained workload.
\end{itemize}

\section{Related Works}
\label{sec:related_works}


\paragraph{Expert Pruning.}
Expert pruning aims to remove redundant experts using importance estimates. Early methods rely on static metrics from pre-trained weights or calibration statistics. MoE-Pruner \cite{xie2024moepruner} combines router weights and activation magnitudes, while \citet{lu2024not} prune experts with low global usage frequency. \citet{guo2025cluster} further introduces structural awareness by clustering experts and removing redundant clusters. Although efficient, these methods are static and context-agnostic, and may discard long-tail experts that are useful for domains such as math or coding. Recent works introduce learnability to improve pruning decisions. DiEP \cite{bai2025diep} learns layer-specific sparsity rates through differentiable pruning. More recently, REAP \cite{lasby2025reap} uses a router-weighted activation criterion to minimize reconstruction error. Nevertheless, pruning remains a binary retain-or-discard operation, which limits the optimization space and cannot consolidate information from useful but low-frequency experts.

\paragraph{Expert Merging.}
Unlike pruning, expert merging seeks to compress the model by consolidating multiple experts into a smaller set of representatives, theoretically preserving the collective knowledge. This direction is grounded in the theory of Linear Mode Connectivity \cite{frankle2020linear} and Model Soups \cite{wortsman2022model}, which suggest that parameters residing in the same basin of attraction can be averaged.
Driven by these theoretical insights, recent research has sought to extend parameter merging techniques to experts within MoE architectures. Early explorations, such as EEP \cite{liu2024efficient}, utilized merging primarily to recover model capacity, aggregating the weights of pruned experts into retained ones. Subsequently,  \citet{li2023merge} further established the feasibility of this paradigm with MC-SMoE, demonstrating that multiple experts can be effectively consolidated into compact representatives without retraining.
Building on this foundation, recent state-of-the-art approaches have adopted clustering algorithms to guide the consolidation process. HC-SMoE \cite{chen2025hcsmoe} employed hierarchical clustering to iteratively identify and average functionally similar experts without retraining. While efficient, this greedy process inherently prioritizes local stepwise decisions, which may miss lower-cost assignments under a joint objective. 
\paragraph{The Gap.}
Current paradigms largely treat pruning and merging as disjoint or sequential operations, limiting the joint decision space needed for stronger trade-offs. Moreover, standard geometric clustering ignores expert activation frequencies, treating critical high-traffic experts identically to rarely used ones, disproportionately inflating the aggregate inference error. Bridging this gap, PRIME introduces a frequency-aware unified optimization framework that jointly determines prune-merge assignments.

\section{Methodology}
\label{sec:method}
In this section, we introduce PRIME, a unified optimization framework for MoE compression. We cast the discrete pruning and merging assignment step as a frequency-aware optimization problem that minimizes a surrogate for compression-induced error. Expert distances are measured by output-based distances as discussed in HC-SMoE \cite{chen2025hcsmoe}. This yields a frequency-aware ILP whose surrogate optimum provides an effective strategy for preserving high-traffic routing paths.

\subsection{Problem Definition}
Let the original expert set be $\mathcal{E} = \{E_1, \dots, E_N\}$. Our goal is to compress the expert set $\mathcal{E}$ into a smaller set $\mathcal{E}'$ of size $K$ ($K < N$) without retraining. This formulation induces a unified decision process in which each original expert $E_i$ is assigned to one of the following roles:
\begin{itemize}[leftmargin=*, noitemsep, topsep=0pt, parsep=0pt, partopsep=0pt]
\item \textbf{Centroid expert}: selected as the representative of a cluster and updated through weighted aggregation with other experts assigned to the same cluster;
\item \textbf{Merged expert}: assigned to a certain centroid expert, contributing its parameters to the centroid via weighted aggregation;
\item \textbf{Pruned expert}: identified as a geometrically isolated outlier or a low-usage expert whose merging cost exceeds the pruning penalty, leading to its removal from the model.
\end{itemize}

\subsection{Frequency-Aware Surrogate Optimization}
\label{sec:ilp_formulation}

We formulate the assignment stage of MoE compression as a discrete optimization problem over a frequency-weighted surrogate objective. Let $f_i$ be the normalized activation frequency of expert $E_i$ on the calibration dataset $\mathcal{D}_{cal}$. Conventional clustering methods typically rely solely on geometric similarity, treating all experts as equivalent nodes in the metric space. This assumption is flawed in the context of MoE inference since the harm caused by merging or pruning an
expert is highly correlated with its usage frequency. A minor distortion in a high-frequency expert accumulates rapidly across tokens, disproportionately inflating the total error compared to significant distortions in rarely used experts.

To address this, we incorporate $f_i$ as a scaling factor. Appendix~\ref{app:error_bound} motivates this weighting by showing that $\sum f_i \cdot d(\cdot, \cdot)$ upper-bounds inference error; the specific output-space metric $D_{ij}$ used in the ILP is an empirical instantiation rather than a direct consequence of the bound (Remark~\ref{rem:metric_gap}). Moreover, as a larger $f_i$ increases the penalty for pruning or reassigning expert $E_i$, this mechanism encourages the solver to anchor high-frequency experts as centroids, thereby discouraging the aggregation of multiple high-traffic experts and promoting routing stability (Theorem \ref{th:sufficient}, Appendix~\ref{app:opt_condition}).

Formally, we define the following binary decision variables:
\begin{itemize}[leftmargin=*, noitemsep, topsep=0pt, parsep=0pt, partopsep=0pt]
\item $z_j \in \{0, 1\}$: Equal to 1 if  $E_j$ is selected as a centroid.
\item $x_{ij} \in \{0, 1\}$: Equal to 1 if $E_i$ is merged into centroid $E_j$.
\item $y_i \in \{0, 1\}$: Equal to 1 if  $E_i$ is pruned.
\end{itemize}
Guided by this frequency-aware surrogate, we formulate the following ILP to minimize total weighted distance alongside pruning penalties:
\begin{equation}
    \min \quad \sum_{i=1}^N \sum_{j=1}^N f_i D_{ij} x_{ij} +  \lambda\sum_{i=1}^N f_i  y_i \label{eq:objective}
\end{equation}
\begin{align}
    \text{s.t.}\quad \sum_{j=1}^N x_{ij} + y_i &= 1, \quad \forall i \in \{1,\dots,N\} \label{eq:assign} \\
    x_{ij} &\le z_j, \quad \forall i, j \in \{1,\dots,N\} \label{eq:consistency} \\
    \sum_{j=1}^N z_j &= K \label{eq:budget} \\
    z_j &\leq 1-y_j, \quad \forall j \in \{1,\dots,N\} \label{eq:self2} \\
    x_{jj} &= z_j, \quad \forall j \in \{1,\dots,N\}. \label{eq:self}  
\end{align}
Here, $D_{ij}$ denotes the normalized Euclidean distance between the mean outputs of $E_i$ and $E_j$ on the calibration set---an empirical output-space proxy that is motivated by, but not identical to, the parameter-space metric in Theorem~\ref{th:error_bound} (see Remark~\ref{rem:metric_gap}). $\lambda$ denotes a constant penalty associated with pruning an expert, which can equivalently be interpreted as the tolerance threshold on the maximum allowable distance between experts for them to be merged. The weighting term $f_i$ ensures that high-frequency experts incur a larger cost when pruned or assigned to distant centroids.
Equation \ref{eq:assign} ensures that every expert is either merged or pruned. Equation \ref{eq:consistency} dictates that an expert can only be merged into an active centroid. Equation \ref{eq:budget} enforces the target model size where $K$ is the target number of experts after compression. 
Equations~\ref{eq:self2} and~\ref{eq:self} enforce self-assignment for every center, ensuring that centers cannot be pruned and preventing phantom centers.

This ILP is solved using off-the-shelf solvers (e.g., Gurobi or CBC), yielding an optimal solution to the surrogate assignment objective in Eq.~\ref{eq:objective}. This does not imply global optimality of the final compressed model under true downstream loss; rather, it provides a tractable assignment step that avoids the local decisions of heuristic clustering. We further analyze the solution structure in Appendix~\ref{app:opt_condition}.

\subsection{Experts Aggregation}
\label{sec:aggregation}
Once the optimal assignment $x^*_{ij}$ is obtained, the parameters $W_j^{'}$ of the reserved centroid $E_j$ are updated by aggregating all experts $E_i$ in its cluster $\mathcal{C}_j = \{i \mid x^*_{ij}=1\}$. We adopt frequency-weighted averaging which has been empirically shown in HC-SMoE \cite{chen2025hcsmoe} to be highly effective compared to common aggregation strategies:
\begin{equation}\label{eq:merge}
    W_j^{'} = \frac{\sum_{i \in \mathcal{C}_j} f_i W_i}{\sum_{i \in \mathcal{C}_j} f_i}.
\end{equation}
This aggregation rule is a post-assignment construction step rather than part of the ILP objective itself. It is nevertheless aligned with the frequency-aware assignment: high-frequency experts dominate the final centroid parameters, reducing drift from frequently used routing paths. Thus, Eq.~\ref{eq:objective} selects prune-merge assignments, while Eq.~\ref{eq:merge} instantiates the selected clusters.

\subsection{Router Reconstruction}
\label{sec:router_reconstruction}
Expert aggregation requires a corresponding router update. Let $W_r\in\mathbb{R}^{N\times d}$ be a linear softmax router with logits $\ell_i(h)=w_i^\top h$ for token representation $h\in\mathbb{R}^d$. Pruning simply deletes router rows; softmax over the remaining logits renormalizes the surviving probabilities. Merging instead requires summing member probabilities, which generally cannot be reproduced by averaging router rows.

\paragraph{Prune-aware grouped targets.}
Let $\mathcal{P}=\{i\mid y_i^*=1\}$ denote the pruned experts, and relabel the $K$ selected clusters from Section~\ref{sec:aggregation} as $\mathcal{G}_1,\ldots,\mathcal{G}_K$. These groups partition the surviving experts. We first remove pruned experts from the normalization and then aggregate probability mass within each group:
\begin{equation}
\begin{aligned}
p_i^{\mathrm{surv}}(h)
 &= \frac{\exp(\ell_i(h))}{\sum_{u\notin\mathcal{P}}\exp(\ell_u(h))},\quad i\notin\mathcal{P},\\
q_j(h)&=\sum_{i\in\mathcal{G}_j}p_i^{\mathrm{surv}}(h).
\end{aligned}
\label{eq:grouped_routing}
\end{equation}
Equivalently, $q(h)$ is the softmax of the grouped logits
\begin{equation}
\widetilde{\ell}_j(h)
=\log\sum_{i\in\mathcal{G}_j}\exp\!\left(w_i^\top h\right).
\label{eq:grouped_logits}
\end{equation}
Singleton groups retain their original router logits. For merged groups, log-sum-exp is generally nonlinear in $h$: exact grouped routing still requires the original surviving logits online, rather than only $K$ linear scores.

\paragraph{Offline linear reconstruction.}
We approximate these input-dependent targets on calibration tokens. Stack their hidden states in $H\in\mathbb{R}^{T\times d}$ and the grouped logits from Eq.~\ref{eq:grouped_logits} in $\widetilde{Z}\in\mathbb{R}^{T\times K}$. The compressed router is obtained by ridge regression:
\begin{equation}
W'_r=\arg\min_{W\in\mathbb{R}^{K\times d}}
\|HW^\top-\widetilde{Z}\|_F^2+\lambda_r\|W\|_F^2,
\label{eq:router_ridge}
\end{equation}
where $\lambda_r>0$ regularizes the fit and is distinct from the ILP pruning penalty $\lambda$. Its closed-form solution is
\begin{equation}
W'_r=\widetilde{Z}^{\top}H
\left(H^\top H+\lambda_r I_d\right)^{-1},
\label{eq:router_solution}
\end{equation}
computed through a linear solve. This calibration-time step uses no backpropagation or iterative optimizer and leaves the aggregated expert weights unchanged. The resulting $K\times d$ weight matrix retains the standard router layout for export in a Hugging Face checkpoint, without an online group-summation operator. The original router is needed only to construct calibration targets, not at inference. The reconstruction approximates grouped probabilities before top-$k$ selection; it does not guarantee identical sparse routing decisions or exact preservation of the original MoE outputs.

\section{Experiments}

\subsection{Experimental Setup}
To evaluate the compression efficacy and generalization boundaries of PRIME, we conduct extensive experiments on Qwen3-30B-A3B, a fine-grained MoE architecture with 128 experts per layer, 3 billion active parameters out of a total of 30 billion. All experiments are conducted on NVIDIA A800 (80GB) GPUs. Our framework is implemented using PyTorch and the Hugging Face Transformers library (v5.9.0). The ILP problem is solved by PuLP \cite{pulp2011} version 3.3.0 with its built-in PULP\_CBC\_CMD solver. Unless otherwise specified, we use a fixed set of hyperparameters, with $\lambda=0.25$, which are empirically observed to work well across all settings.

\textbf{Models.} We investigate performance boundaries by compressing Qwen3-30B-A3B to 96, 80, and 72 experts per layer, corresponding to reduction rates of 25\%, 37.5\%, and 43.75\%, respectively. To demonstrate scalability to industrial-level models, we further evaluate PRIME on Qwen3.5-122B-A10B, a significantly larger architecture with 256 experts per layer and 10 billion active parameters out of 122 billion total. We compress this model to 192, 160, and 144 experts (Section~\ref{sec:scalability}).

\textbf{Baselines.} We benchmark PRIME against HC-SMoE \citep{chen2025hcsmoe} and REAP \citep{lasby2025reap}, representing the current state-of-the-art in retraining-free expert merging and pruning, respectively. These two methods subsume earlier approaches in their respective paradigms: HC-SMoE has been shown to outperform prior merge methods (e.g., MC-SMoE), while REAP improves upon earlier pruning heuristics (e.g., MoE-Pruner, frequency-threshold pruning). Together they span the prune-merge spectrum against which a unified method should be evaluated. To ensure a controlled comparison, PRIME and HC-SMoE share the same calibration data, output-based distance metric, and frequency-weighted aggregation; the sole difference is the assignment strategy (ILP vs. greedy hierarchical clustering). REAP, as a prune-only method, does not involve distance computation or weight merging.

\textbf{Benchmarks.} We adopt a comprehensive evaluation suite spanning four capability dimensions: (1) \emph{Multiple Choice}: BoolQ \citep{clark2019boolq}, MMLU \citep{hendrycks2021mmlu}, OpenBookQA (OBQA) \citep{mihaylov2018openbookqa}, and RTE \citep{bentivogli2009rte}; (2) \emph{Long Context}: LongBench \citep{bai2024longbench}; (3) \emph{Code Generation}: HumanEval (HE) \citep{chen2021humaneval} and MBPP \citep{austin2021mbpp}; (4) \emph{Mathematical Reasoning}: GSM8K \citep{cobbe2021gsm8k}. This diverse suite enables us to assess whether compression preserves both knowledge retrieval and generative capabilities. For calibration data, we sample 32 sequences of 2,048 tokens each from a mixed corpus of C4 \citep{raffel2020t5}, Evol-CodeAlpaca \citep{luo2023wizardcoder}, and NuminaMath \citep{li2024numinamath} to cover diverse capability dimensions.

\subsection{Main Results and Performance Comparisons}
\label{sec:mainresults}

\begin{table*}[t]
\caption{Performance comparison on Qwen3-30B-A3B. $K$ = retained experts per layer.}
\label{tab:main_results_qwen}
\centering
\begin{small}
\begin{sc}
\resizebox{\textwidth}{!}{%
\begin{tabular}{c l ccccc c cc c c c}
\toprule
 & & \multicolumn{5}{c}{Multiple Choice} & \multicolumn{1}{c}{Long Context} & \multicolumn{3}{c}{Coding} & Math \\
\cmidrule(lr){3-7} \cmidrule(lr){8-8} \cmidrule(lr){9-11} \cmidrule(lr){12-12}
$K$ & Method & BoolQ & MMLU & OBQA & RTE & MC Avg & LongBench & HE & MBPP & Coding Avg & GSM8K \\
\midrule
128 & Original & 86.91 & 77.73 & 40.20 & 82.67 & 71.88 & 41.00 & 90.85 & 77.51 & 84.18 & 91.43 \\
\midrule
\multirow{3}{*}{96} & HC-SMoE & 84.98 & 62.32 & 38.60 & 77.62 & 65.88 & 39.38 & \textbf{90.85} & 74.07 & 82.46 & 90.22 \\
& REAP & 85.11 & 72.03 & \textbf{39.60} & 79.06 & 68.95 & 40.25 & 87.80 & 76.19 & 82.00 & \textbf{91.43} \\
& \textbf{PRIME (Ours)} & \textbf{85.96} & \textbf{74.89} & 39.40 & \textbf{81.59} & \textbf{70.46} & \textbf{41.75} & 87.20 & \textbf{77.78} & \textbf{82.49} & 91.21 \\
\midrule
\multirow{3}{*}{80} & HC-SMoE & 81.77 & 54.82 & 35.00 & 77.26 & 62.21 & 37.50 & \textbf{87.20} & 62.43 & 74.82 & 86.73 \\
& REAP & 82.39 & \textbf{66.64} & 36.60 & 77.98 & 65.90 & 37.13 & 81.71 & 58.47 & 70.09 & \textbf{89.84} \\
& \textbf{PRIME (Ours)} & \textbf{83.67} & 65.09 & \textbf{37.40} & \textbf{79.42} & \textbf{66.40} & \textbf{39.38} & 86.59 & \textbf{73.81} & \textbf{80.20} & 89.08 \\
\midrule
\multirow{3}{*}{72} & HC-SMoE & 78.93 & 50.27 & 34.60 & \textbf{76.17} & 59.99 & 35.25 & \textbf{80.49} & 61.64 & 71.07 & 54.74 \\
& REAP & 73.88 & 55.79 & 31.20 & 56.32 & 54.30 & 27.75 & 75.61 & 34.13 & 54.87 & 64.06 \\
& \textbf{PRIME (Ours)} & \textbf{80.55} & \textbf{59.60} & \textbf{36.20} & 65.34 & \textbf{60.42} & \textbf{39.13} & 78.66 & \textbf{65.61} & \textbf{72.13} & \textbf{87.49} \\
\bottomrule
\end{tabular}
}
\end{sc}
\end{small}
\end{table*}

\begin{table*}[t]
\caption{Ablation study on Qwen3-30B-A3B. Prune-Only: $\lambda=0$; Merge-Only: $\lambda=1$; w/o Frequency: $f_i=1$ for all experts.}
\label{tab:ablation}
\centering
\begin{small}
\begin{sc}
\resizebox{\textwidth}{!}{%
\begin{tabular}{c l ccccc c cc c c c}
\toprule
 & & \multicolumn{5}{c}{Multiple Choice} & \multicolumn{1}{c}{Long Context} & \multicolumn{3}{c}{Coding} & Math \\
\cmidrule(lr){3-7} \cmidrule(lr){8-8} \cmidrule(lr){9-11} \cmidrule(lr){12-12}
$K$ & Strategy & BoolQ & MMLU & OBQA & RTE & MC Avg & LongBench & HE & MBPP & Coding Avg & GSM8K \\
\midrule
\multirow{4}{*}{96} & Prune-Only & 84.13 & 71.31 & \textbf{39.60} & 80.14 & 68.80 & 40.63 & 82.93 & 76.46 & 79.69 & \textbf{91.58} \\
& Merge-Only & 85.66 & 69.66 & 39.00 & \textbf{81.59} & 68.98 & 41.13 & 80.49 & 75.93 & 78.21 & 90.98 \\
& w/o Frequency & 84.34 & 68.22 & 35.80 & 78.70 & 66.77 & 40.50 & 85.37 & 74.60 & 79.98 & 90.98 \\
& \textbf{PRIME} & \textbf{85.96} & \textbf{74.89} & 39.40 & \textbf{81.59} & \textbf{70.46} & \textbf{41.75} & \textbf{87.20} & \textbf{77.78} & \textbf{82.49} & 91.21 \\
\midrule
\multirow{4}{*}{80} & Prune-Only & 82.39 & 61.65 & 31.80 & 77.98 & 63.45 & 37.13 & 75.00 & 57.14 & 66.07 & 88.40 \\
& Merge-Only & 82.84 & 60.82 & 35.20 & 79.06 & 64.48 & 39.00 & 85.37 & 70.90 & 78.13 & 88.70 \\
& w/o Frequency & 80.95 & 59.81 & 34.40 & 75.45 & 62.65 & 38.75 & 83.54 & 67.99 & 75.76 & 84.76 \\
& \textbf{PRIME} & \textbf{83.67} & \textbf{65.09} & \textbf{37.40} & \textbf{79.42} & \textbf{66.40} & \textbf{39.38} & \textbf{86.59} & \textbf{73.81} & \textbf{80.20} & \textbf{89.08} \\
\midrule
\multirow{4}{*}{72} & Prune-Only & 73.88 & 50.80 & 30.20 & 56.32 & 52.80 & 28.13 & 73.78 & 35.71 & 54.75 & 76.95 \\
& Merge-Only & 78.62 & 54.74 & 33.20 & 63.54 & 57.53 & 38.00 & 76.22 & \textbf{69.31} & \textbf{72.77} & 85.60 \\
& w/o Frequency & 80.06 & 48.93 & 27.60 & \textbf{67.51} & 56.03 & 35.50 & 70.12 & 61.11 & 65.62 & 84.00 \\
& \textbf{PRIME} & \textbf{80.55} & \textbf{59.60} & \textbf{36.20} & 65.34 & \textbf{60.42} & \textbf{39.13} & \textbf{78.66} & 65.61 & 72.13 & \textbf{87.49} \\
\bottomrule
\end{tabular}
}
\end{sc}
\end{small}
\end{table*}

Table~\ref{tab:main_results_qwen} presents the evaluation results on Qwen3-30B-A3B across varying compression levels. PRIME achieves the best aggregate performance at every compression level, though individual metrics occasionally favor single-paradigm baselines.

At 25\% expert reduction ($K=96$), PRIME achieves the highest MC average of 70.46, compared with 68.95 for REAP and 65.88 for HC-SMoE, while maintaining a long-context score similar to the original model. We note that REAP matches the original on GSM8K (91.43) and HC-SMoE retains the original HumanEval (90.85), indicating that single-paradigm methods can retain specific strengths under mild compression. At $K=80$ (37.5\% reduction), the coding advantage becomes pronounced: PRIME preserves 80.20 coding average where both baselines degrade to 74.82 and 70.09, while still leading on MC average. REAP retains a slight edge on MMLU (66.64 vs.\ 65.09), indicating its pruning criterion better preserves certain capabilities at this level.

The gap becomes most pronounced at $K=72$ (43.75\% reduction). PRIME maintains 60.42 MC average, 72.13 coding average, 39.13 LongBench, and 87.49 GSM8K, while REAP collapses across all dimensions and HC-SMoE suffers catastrophic math degradation to 54.74 on GSM8K. Notably, PRIME at $K=72$ retains 87.49 on GSM8K, only 3.94 below the original, demonstrating exceptional mathematical robustness under aggressive compression.

Across all settings, a consistent pattern emerges: REAP occasionally preserves narrow advantages on specific tasks (e.g., MMLU, GSM8K) under mild compression, while HC-SMoE retains isolated coding strengths (HumanEval) due to its pure merge strategy avoiding expert removal. However, both baselines suffer catastrophic failures in at least one dimension under aggressive compression, whereas PRIME's primary advantage lies in balanced cross-dimension retention without such collapses.

\subsection{Ablation Study}
\label{sec:ablation}

To disentangle the contributions of individual components in PRIME, we conduct multidimensional ablation studies on Qwen3-30B-A3B. The quantitative results are summarized in Table~\ref{tab:ablation}.

\paragraph{Unified Adaptive Prune-and-Merge vs. Prune-Only / Merge-Only.}
We compare PRIME against two boundary conditions: \textit{Prune-Only} with $\lambda=0$ that forces pure elimination, and \textit{Merge-Only} with $\lambda=1$ that forces pure consolidation.
At $K=80$, PRIME achieves the best MC average of 66.40 and GSM8K of 89.08, outperforming both \textit{Prune-Only} at 63.45 / 88.40 and \textit{Merge-Only} at 64.48 / 88.70, while also leading on coding average (80.20 vs.\ 78.13 for Merge-Only and 66.07 for Prune-Only). The complementarity becomes clearer at $K=72$: \textit{Prune-Only} collapses on coding to a 54.75 average and long-context to 28.13, while \textit{Merge-Only} degrades substantially on MC to a 57.53 average and math to 85.60. PRIME still maintains good aggregate balance across dimensions. This demonstrates that the adaptive formulation effectively selects stronger per-layer prune-merge decisions, avoiding the catastrophic failures of either boundary approach.

\paragraph{Impact of Frequency-Awareness.}
We compare PRIME against a frequency-agnostic baseline that sets $f_i=1$ for all experts.
The impact grows with compression intensity. At $K=96$, removing frequency awareness reduces MC average from 70.46 to 66.77, and coding average from 82.49 to 79.98. At $K=80$, the degradation widens: MC average drops by 3.75 to 62.65, and math drops by 4.32 from 89.08 to 84.76. At the extreme $K=72$, the frequency-agnostic variant falls to 56.03 MC average and 65.62 coding average, while PRIME sustains 60.42 and 72.13 respectively. This confirms that frequency weighting is important for discouraging the solver from merging high-utility experts into less suitable clusters, with the effect amplified under aggressive compression where fewer centroid slots must be allocated to the most impactful experts.

\begin{figure}[t]
\centering
\centerline{\includegraphics[width=\columnwidth]{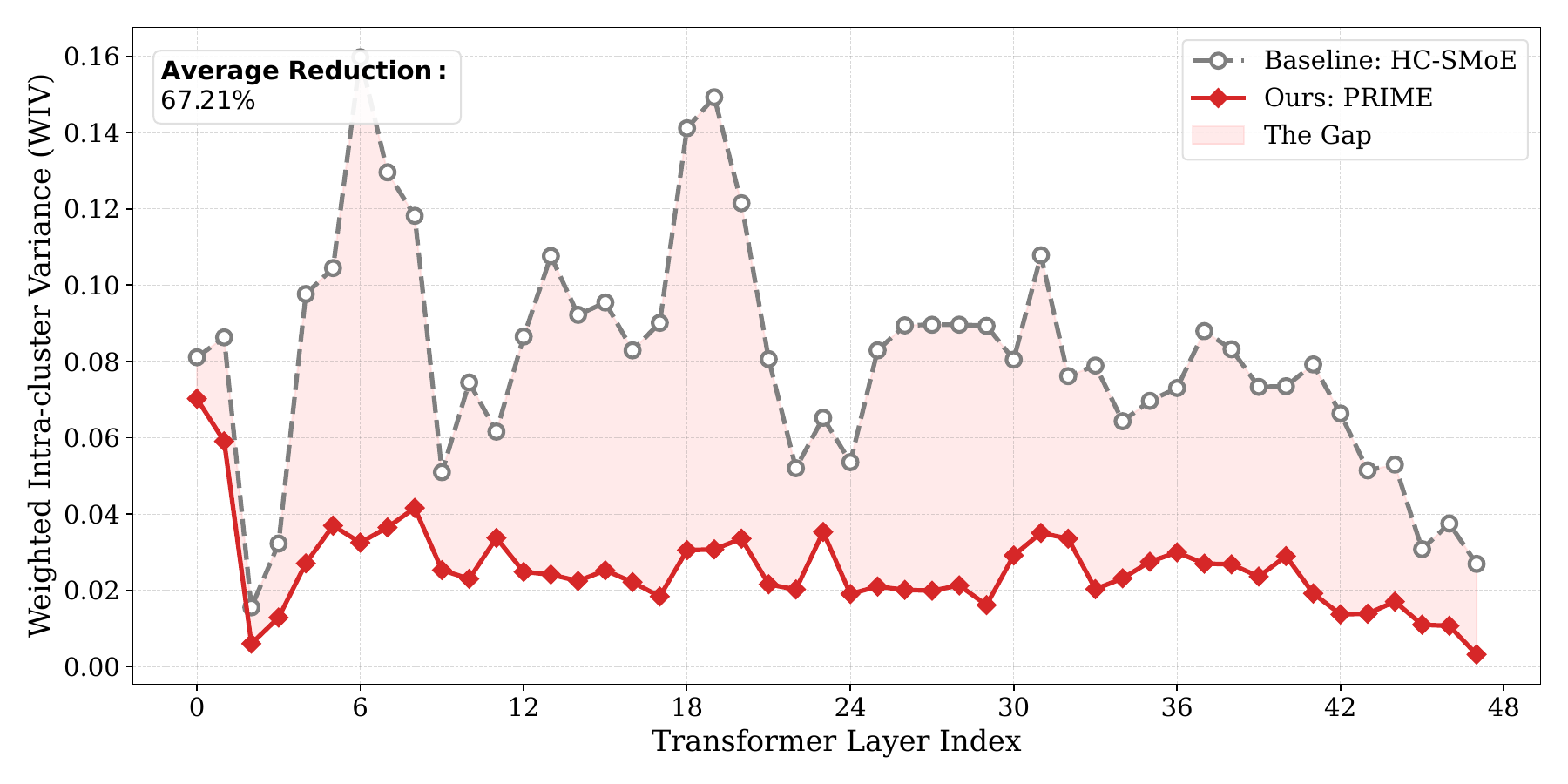}}
\caption{Layer-wise WIV comparison on Qwen3-30B-A3B. PRIME ($\lambda=1$) reduces variance by 67.21\% on average over HC-SMoE.}
\label{fig:wiv}
\end{figure}

\begin{figure}[t]
\centering
\includegraphics[width=\columnwidth]{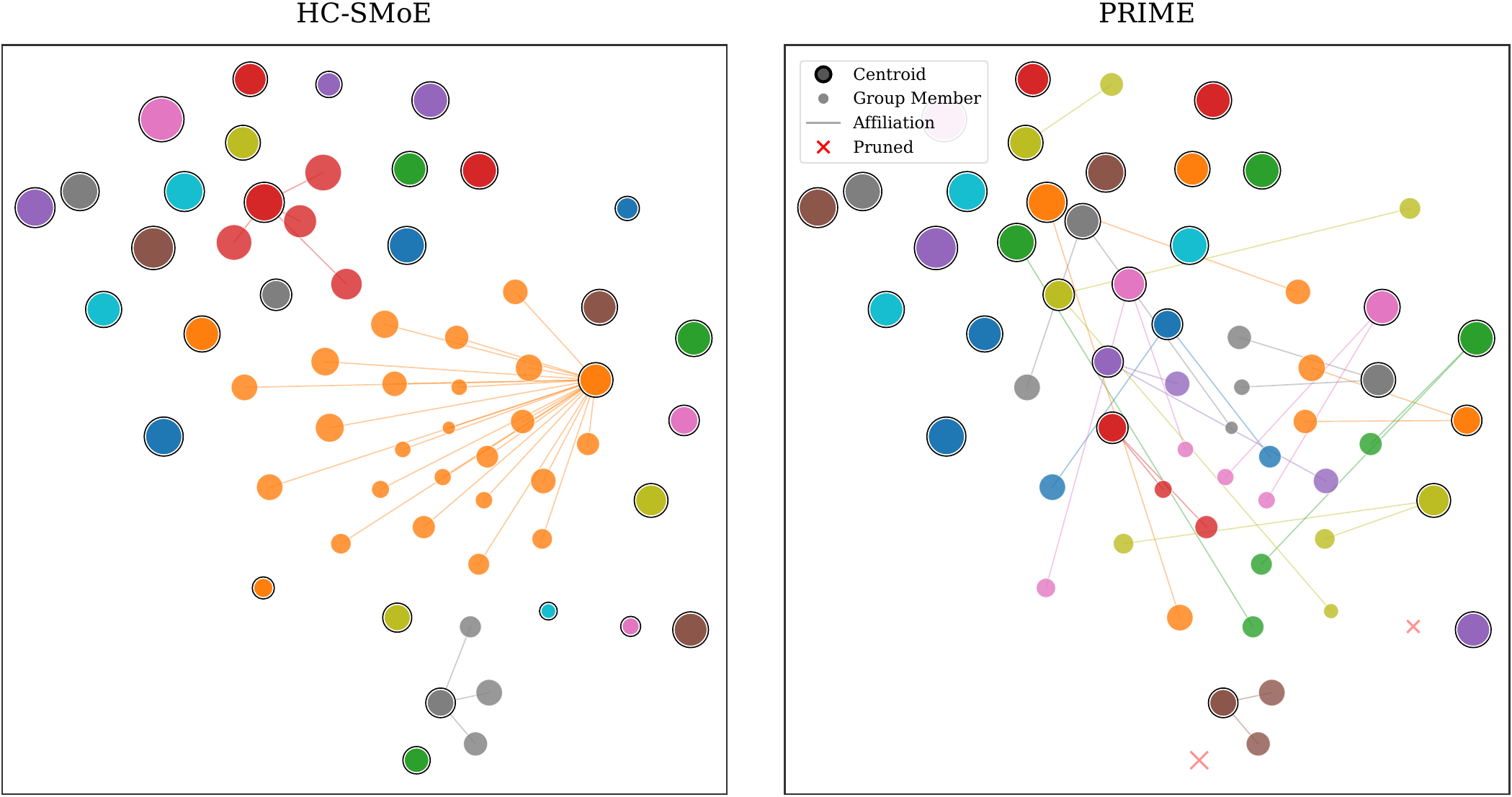}
\caption{t-SNE visualization of expert clustering on Qwen1.5-MoE (see Appendix~\ref{app:tsne_qwen3} for Qwen3). Left: HC-SMoE; Right: PRIME.}
\label{fig:tsne_viz}
\end{figure}

\subsection{Mechanistic Analysis}
\label{sec:analysis}

\textbf{Reducing Compression-Induced Distortion via Frequency-Aware Surrogate Optimization.}
Expert merging methods consolidate multiple experts into shared centroids, which can introduce failure modes such as feature dilution and routing collapse that are less common in pruning-based approaches. In this section, we analyze these merging-specific phenomena and compare PRIME primarily against HC-SMoE, the representative merge-based baseline.
To shed light on why PRIME achieves better results than greedy heuristics, we analyze how different compression strategies influence the behavior of expert groups.
Quantitatively, we employ Weighted Intra-cluster Variance (WIV) to measure the frequency-weighted deviation of experts from their cluster centroids, serving as an empirical proxy for consolidation-induced distortion. Figure~\ref{fig:wiv} shows that PRIME reduces WIV by an average of 67.21\% compared to HC-SMoE.
While this improvement is expected given our frequency-explicit objective, the magnitude of the fidelity gap exposes the critical limitation of the uniform importance assumption inherent in baselines. By solely minimizing unweighted geometric variance, frequency-agnostic baselines fail to account for the disproportionate effect of altering high-frequency experts, leading to the destructive consolidation of geometrically similar but high-impact experts.
PRIME mitigates this issue by aligning the assignment objective with usage-weighted distortion, preventing such disproportionate degradation.
Qualitatively, Figure~\ref{fig:tsne_viz} shows the structural impact of compression by projecting the expert parameter manifold using t-SNE \cite{laurens2008tsne}. Note that we use Qwen1.5-MoE here for better clarity due to its manageable number of experts, while Qwen3-MoE results are provided in the Appendix \ref{app:tsne_qwen3}.
The baseline (left) demonstrates that the greedy nature of existing heuristics can lead to locally myopic assignments. Compounded by a frequency-agnostic reliance on geometric proximity, these approaches compel the assimilation of numerous heterogeneous experts into overly dense aggregations, resulting in severe feature dilution.
In contrast, PRIME (right) maintains a spatially balanced manifold.
Additionally, red crosses represent pruned experts. PRIME removes these geometrically isolated outliers instead of forcing them into ill-fitted clusters.

\begin{figure}[t]
\centering
\centerline{\includegraphics[width=\columnwidth]{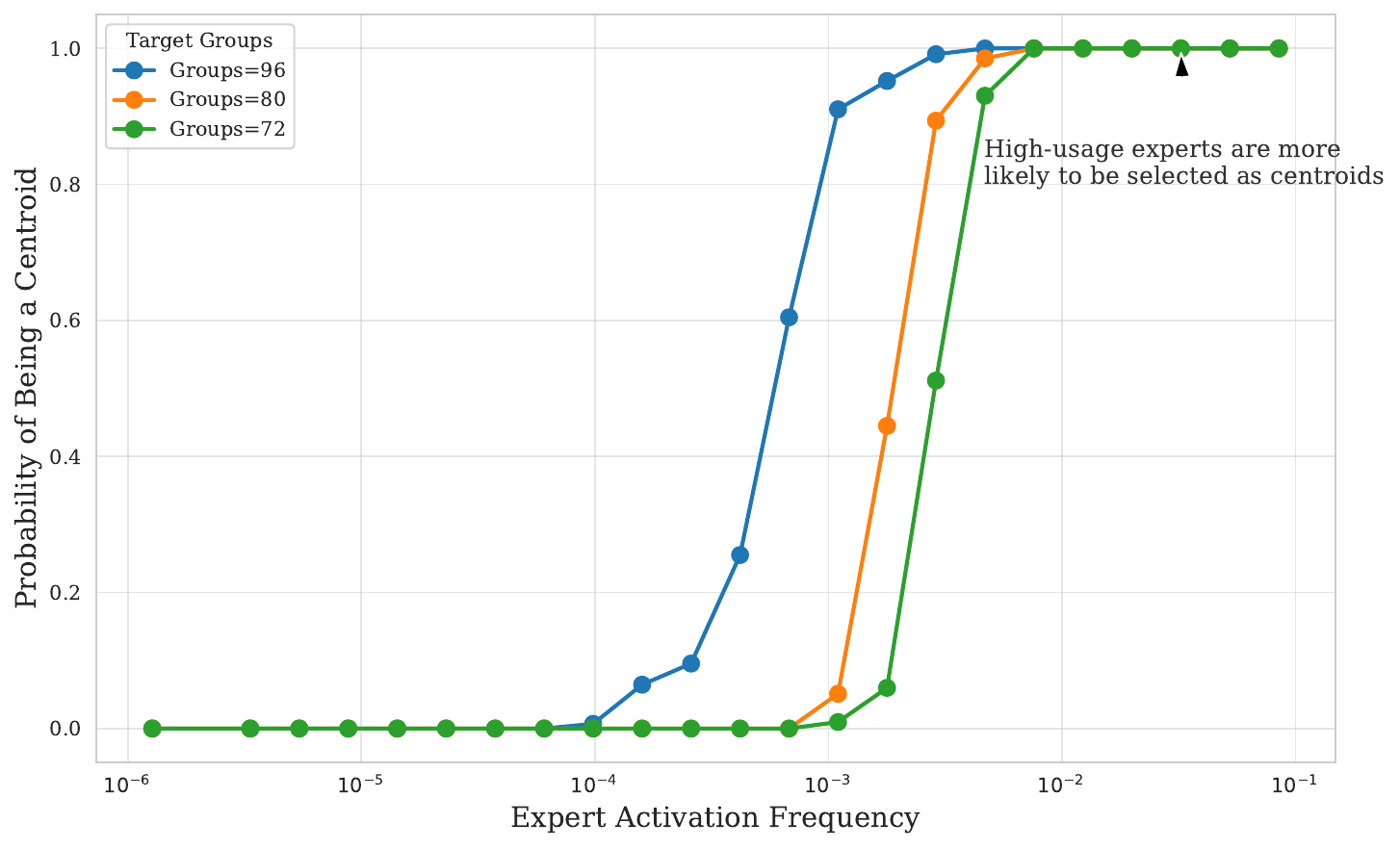}}
\caption{Centroid selection probability vs.\ expert activation frequency on Qwen3-30B-A3B across compression levels.}
\label{fig:centroid_selection}
\end{figure}

\begin{figure}[t]
\centering
\includegraphics[width=0.325\columnwidth]{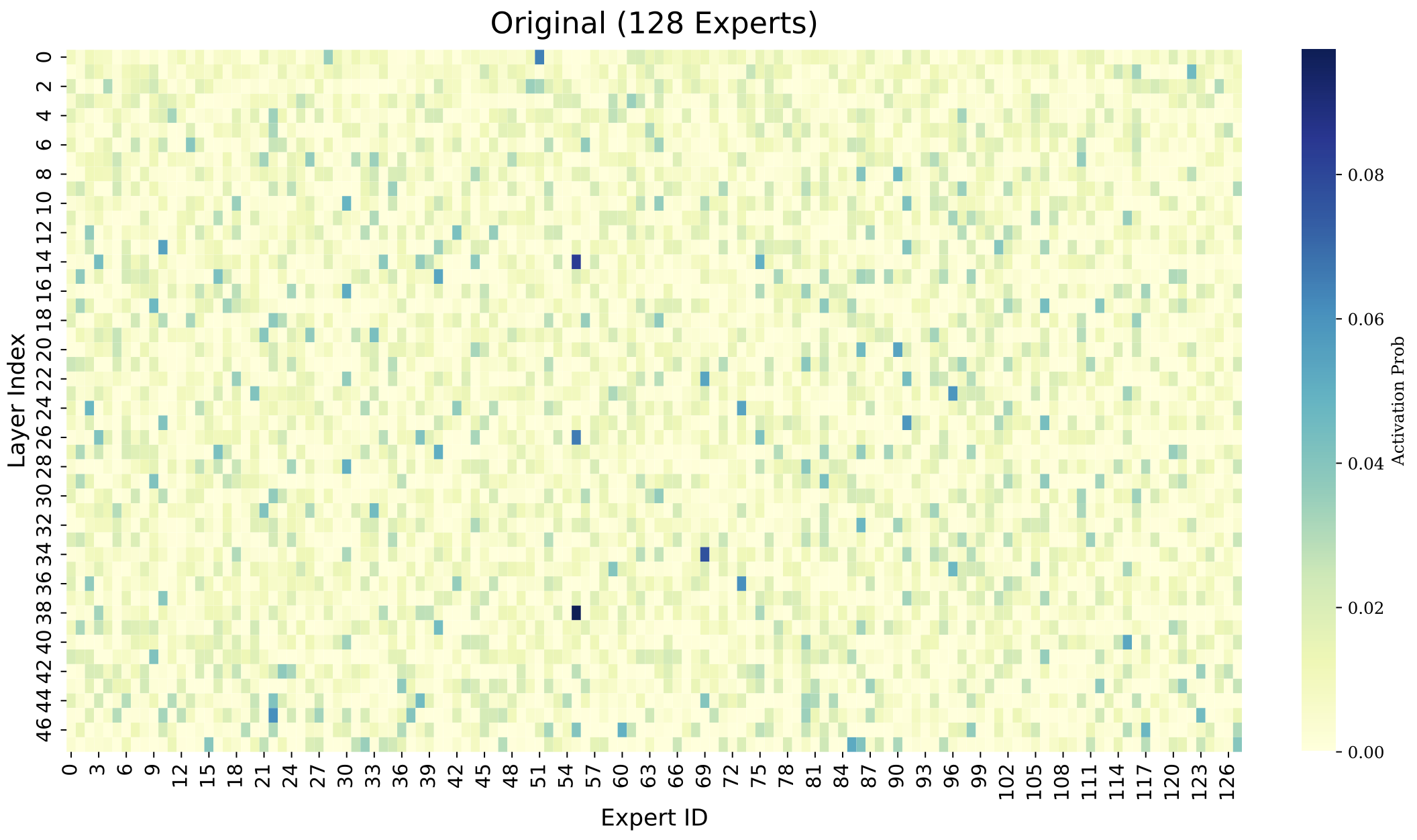}\hfill
\includegraphics[width=0.325\columnwidth]{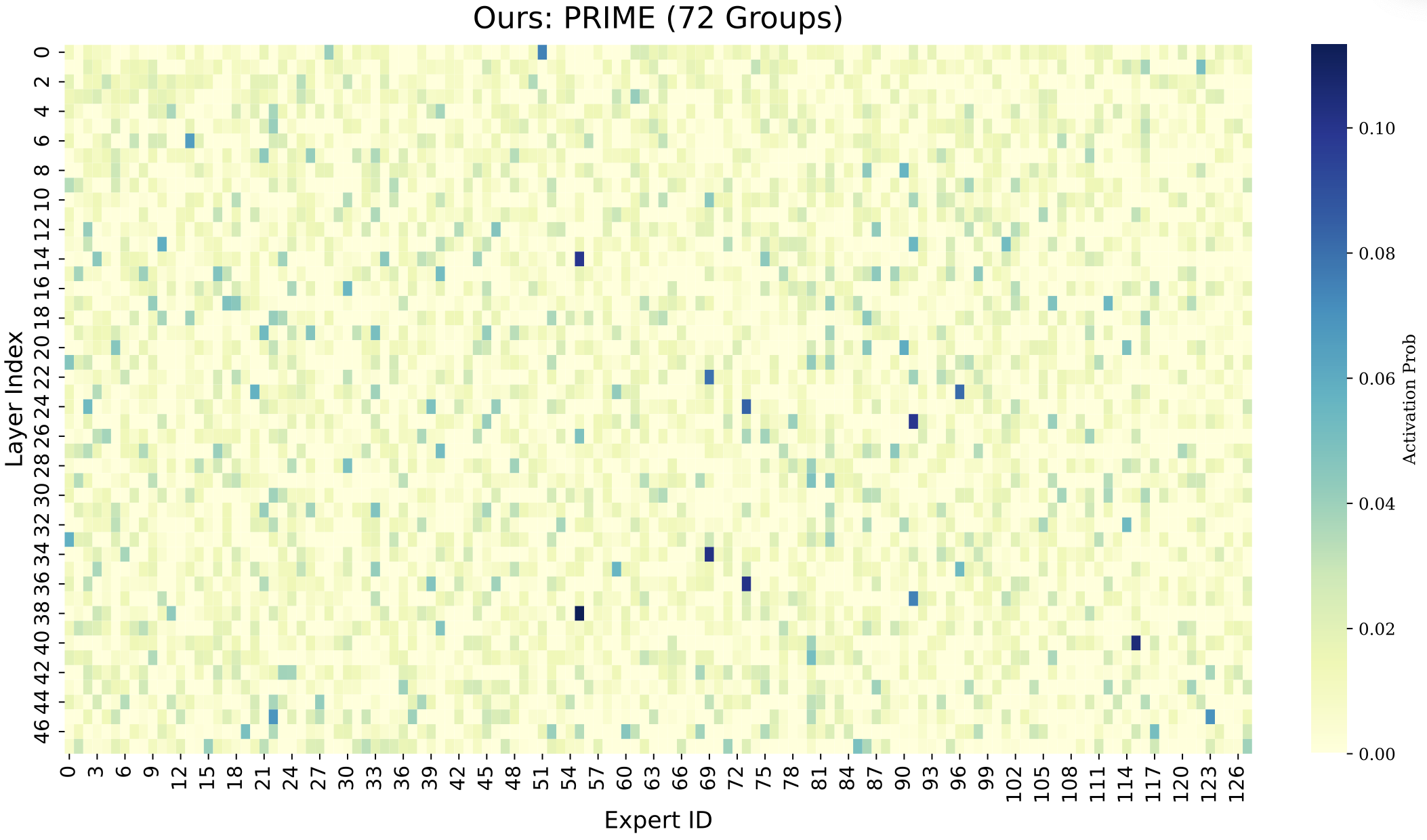}\hfill
\includegraphics[width=0.325\columnwidth]{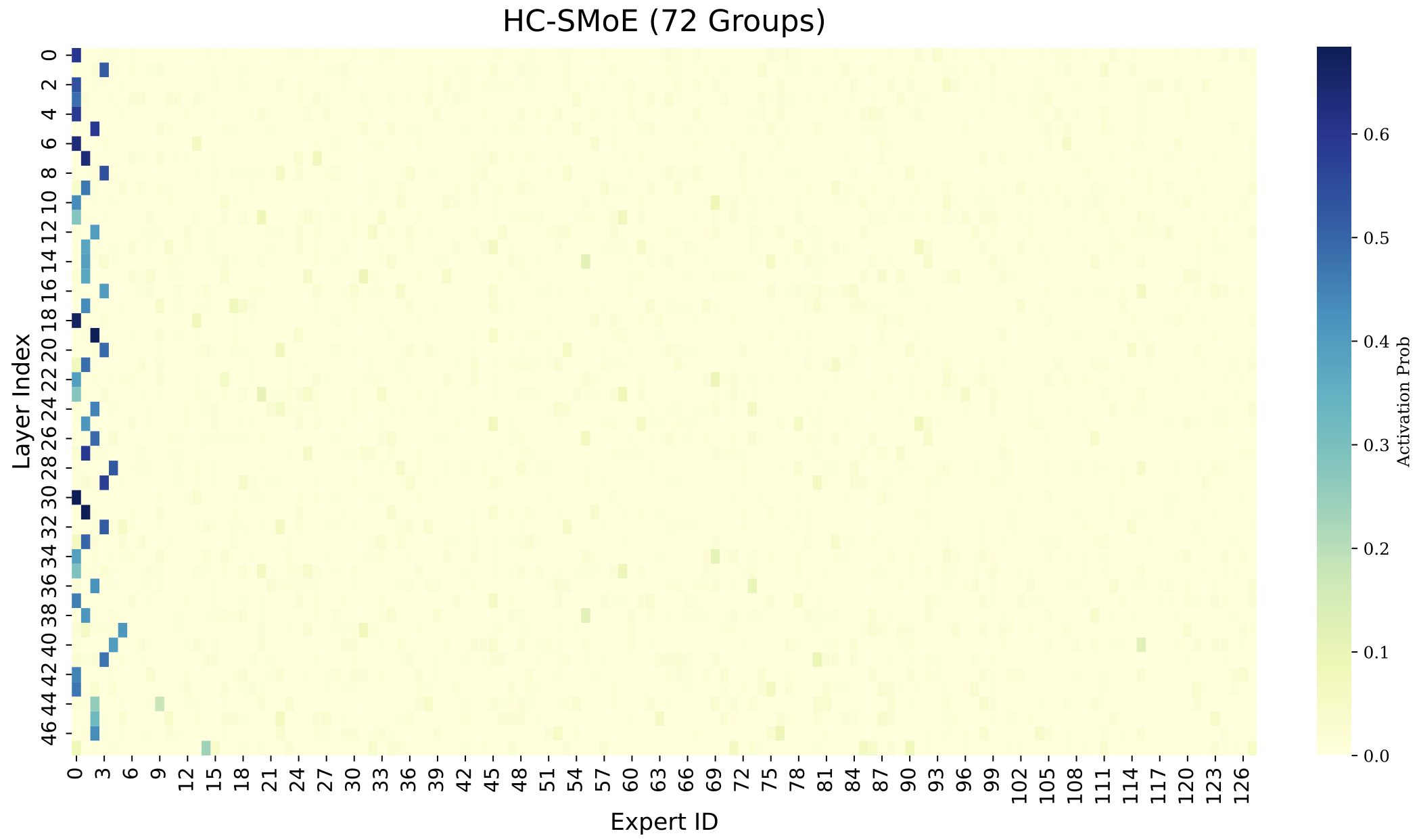}
\caption{Expert routing heatmaps on calibration set (Qwen3-30B-A3B, $K=72$). Left: Original; Middle: PRIME; Right: HC-SMoE.}
\label{fig:routing_topology}
\end{figure}

\paragraph{Implicit Soft Gating for Centroid Selection.}
We investigate the dependence of expert centroid selection on activation frequency.
Figure~\ref{fig:centroid_selection} reveals that the global budget constraint in Eq.~\ref{eq:budget} induces a sigmoid-like transition, effectively acting as an implicit soft gate.
High-frequency experts with activation frequency greater than $10^{-2}$ are systematically prioritized as centroids, consistent with the surrogate-optimality conditions derived in Appendix~\ref{app:opt_condition}.
Serving as aggregation anchors, these experts dominate the resulting parameter composition via our frequency-weighted merging in Eq.~\ref{eq:merge}, thereby reducing representational drift.
Furthermore, as the compression target tightens from $K=96$ to $K=72$, this soft gate adaptively shifts rightward.
This behavior is encouraged by the objective function in Eq.~\ref{eq:objective}, where the merging penalty scales linearly with frequency $f_i$. To reduce total weighted assignment cost under stricter constraints, the limited centroid budget is competitively reallocated to the highest-frequency nodes. Mid-frequency experts retained at $K=96$ thus fail to satisfy the centroid budget, effectively elevating the minimum activation frequency required for centroid selection.

\paragraph{Routing Stability and Load Balancing.}
Finally, we examine the impact of compression on token routing dynamics by visualizing expert activation heatmaps as shown in Figure~\ref{fig:routing_topology}. Since routing collapse is a merging-specific failure mode where pruning-based methods simply remove experts without altering the remaining routing structure, we focus the comparison on HC-SMoE.
The HC-SMoE panel on the right exhibits severe load imbalance, visually manifested as dark vertical banding. This pattern indicates that traffic is disproportionately funneled into a limited subset of over-merged centroids.
In contrast, the PRIME panel in the middle closely aligns with the routing distribution of the original model shown on the left.
Crucially, by aligning the dominant activation pathways with the uncompressed model, PRIME effectively preserves routing stability and maintains distributed expert utilization, avoiding the mode collapse observed in HC-SMoE.

\begin{table*}[t]
\caption{Scalability evaluation on Qwen3.5-122B-A10B (256 experts per layer).}
\label{tab:scalability}
\centering
\begin{small}
\begin{sc}
\resizebox{\textwidth}{!}{%
\begin{tabular}{c l ccccc c cc c c c}
\toprule
 & & \multicolumn{5}{c}{Multiple Choice} & \multicolumn{1}{c}{Long Context} & \multicolumn{3}{c}{Coding} & Math \\
\cmidrule(lr){3-7} \cmidrule(lr){8-8} \cmidrule(lr){9-11} \cmidrule(lr){12-12}
$K$ & Method & BoolQ & MMLU & OBQA & RTE & MC Avg & LongBench & HE & MBPP & Coding Avg & GSM8K \\
\midrule
256 & Original & 90.18 & 86.70 & 45.80 & 85.56 & 77.06 & 54.25 & 92.68 & 90.74 & 91.71 & 93.48 \\
\midrule
\multirow{3}{*}{192} & HC-SMoE & 86.33 & \textbf{85.53} & \textbf{46.40} & 78.70 & 74.24 & \textbf{53.88} & 80.49 & 78.57 & 79.53 & \textbf{92.80} \\
& REAP & 87.77 & 76.47 & 44.00 & 80.87 & 72.28 & 37.75 & 90.24 & \textbf{90.48} & 90.36 & 90.75 \\
& \textbf{PRIME (Ours)} & \textbf{88.78} & 84.50 & \textbf{46.40} & \textbf{84.84} & \textbf{76.13} & 53.38 & \textbf{92.07} & 89.42 & \textbf{90.75} & 92.34 \\
\midrule
\multirow{3}{*}{160} & HC-SMoE & 74.25 & \textbf{75.89} & \textbf{43.80} & 71.48 & 66.36 & 46.50 & 39.63 & 71.43 & 55.53 & 88.93 \\
& REAP & \textbf{82.91} & 66.49 & 40.00 & 79.06 & 67.12 & 35.38 & 85.37 & 82.28 & 83.83 & 88.40 \\
& \textbf{PRIME (Ours)} & 78.96 & 75.57 & 42.20 & \textbf{83.39} & \textbf{70.03} & \textbf{49.75} & \textbf{86.59} & \textbf{85.19} & \textbf{85.89} & \textbf{90.45} \\
\midrule
\multirow{3}{*}{144} & HC-SMoE & 71.19 & 68.64 & \textbf{42.60} & 70.04 & 63.12 & 45.00 & 34.76 & 59.79 & 47.28 & \textbf{85.44} \\
& REAP & \textbf{81.65} & 60.42 & 35.00 & 84.12 & 65.30 & 30.75 & \textbf{84.15} & 79.89 & 82.02 & 81.96 \\
& \textbf{PRIME (Ours)} & 69.79 & \textbf{71.85} & 41.00 & \textbf{85.56} & \textbf{67.05} & \textbf{46.63} & 82.93 & \textbf{83.07} & \textbf{83.00} & 85.06 \\
\bottomrule
\end{tabular}
}
\end{sc}
\end{small}
\end{table*}

\subsection{Scalability Analysis}
\label{sec:scalability}

To validate that PRIME scales to industrial-level MoE architectures, we evaluate on Qwen3.5-122B-A10B, a model with 256 experts per layer and 10 billion active parameters out of 122 billion total. Compared with Qwen3-30B-A3B, this model has 2$\times$ as many experts, which may pose greater challenges for the ILP solver.

\paragraph{Performance at Scale.} Table~\ref{tab:scalability} presents results across three compression levels. At $K=192$, corresponding to 25\% reduction, PRIME achieves the best MC average of 76.13 and coding average of 90.75, outperforming HC-SMoE at 74.24 / 79.53 and REAP at 72.28 / 90.36. HC-SMoE retains the best LongBench at 53.88 and GSM8K at 92.80, indicating that its clustering-based merging preserves certain capabilities at mild compression; however, its coding performance degrades significantly to 79.53 vs.\ 90.75 for PRIME. At $K=160$, corresponding to 37.5\% reduction, PRIME leads on MC with 70.03 vs.\ 67.12 for REAP and 66.36 for HC-SMoE, on coding with 85.89 vs.\ 83.83 and 55.53, and on long-context with 49.75 vs.\ 35.38 and 46.50. HC-SMoE suffers catastrophic coding collapse to a 55.53 average, driven by HumanEval dropping to 39.63. Even at $K=144$, corresponding to 43.75\% reduction, PRIME maintains the best MC average of 67.05, coding average of 83.00, and LongBench of 46.63, confirming that the ILP formulation scales effectively to 256-expert industrial models.

\paragraph{Computational Overhead Profiling.}\hspace{0.5em} A key concern for scaling ILP-based compression is computational tractability. Table~\ref{tab:overhead} profiles the end-to-end runtime of PRIME on both models. At the larger 256-expert scale, the total profiling overhead remains practical, requiring 15.5 minutes compared with 3.5 minutes for the 128-expert setting. The route collection scales from 62s to 295s and frequency statistics from 66s to 303s, both proportional to model size. The ILP solving phase grows from 42s at 0.87s per layer to 288s at 6.0s per layer, reflecting the expected increase in constraint complexity with more experts. Critically, the similarity computation scales from 38s to 46s, remaining below 1s per layer, indicating that our pairwise distance formulation does not become a bottleneck. The merging step remains negligible at under 1s at both scales. These results demonstrate that PRIME remains practically deployable even for models with 256 experts, requiring only 15.5 minutes of one-time compute that is amortized over the lifetime of the compressed model.

\begin{table}[t]
\caption{End-to-end overhead profiling of PRIME on A800 GPUs.}
\label{tab:overhead}
\centering
\begin{small}
\resizebox{\columnwidth}{!}{%
\begin{tabular}{l c ccccc c}
\toprule
{\sc Model} & {\sc Experts} & {\sc Route} & {\sc Collect} & {\sc Distance} & {\sc Solve} & {\sc Merge} & {\sc Total} \\
\midrule
Qwen3-30B-A3B & 128 & 62s & 66s & 38s & 42s & 0.13s & 3.5min \\
Qwen3.5-122B-A10B & 256 & 295s & 303s & 46s & 288s & 0.72s & 15.5min \\
\bottomrule
\end{tabular}
}
\end{small}
\end{table}

\subsection{Deployment Metrics on vLLM}
\label{sec:deployment}

We benchmark the original 128-expert Qwen3-30B-A3B and its merged 96-expert checkpoint on vLLM v0.28.0, using one A800-80GB GPU (TP1), BF16, a 16,384-token context limit, and 90\% GPU memory utilization. Both checkpoints are served directly with top-$k=8$ unchanged; measurements focus on serving efficiency, and task accuracy is not evaluated in these runs.

\begin{table}[t]
\caption{TP1 results for the merged 96-expert model: 512 input tokens, 4,096 output tokens, concurrency 128. Throughput is relative to the original 128-expert model.}
\label{tab:deployment}
\centering
\begin{small}
\resizebox{\columnwidth}{!}{%
\begin{tabular}{l cc}
\toprule
{\sc Metric} & {\sc 128 Experts} & {\sc 96 Experts} \\
\midrule
Weight Memory & 56.93 GiB & 43.43 GiB \\
KV Cache Memory & 11.17 GiB & 24.57 GiB \\
KV Capacity (tokens) & 121,968 & 268,400 \\
Output Throughput & 1.00 & 1.74 \\
\bottomrule
\end{tabular}
}
\end{small}
\end{table}

Table~\ref{tab:deployment} illustrates a memory-constrained, long-output workload. Merging from 128 to 96 experts frees 13.50 GiB of weight memory for vLLM's PagedAttention KV cache \citep{kwon2023vllm}. Capacity grows from 121,968 to 268,400 tokens, and output throughput improves by 1.74$\times$. KV storage per token is unchanged; more tokens can remain resident under the same memory budget.

The gains reflect both reduced expert-weight traffic and expanded KV capacity, rather than KV space alone. Single-request and prefill-dominated workloads show little or no speedup. Appendix~\ref{app:serving_benefits} gives the full analysis, including controlled comparisons, setup details, and measurement limits. These measured benefits apply to the merged checkpoints under the stated hardware and workload conditions.

\section{Conclusion}
\label{sec:conclusion}
We present PRIME, a unified framework that formulates MoE compression as a frequency-aware surrogate optimization problem over expert pruning and merging assignments. PRIME departs from prior disjoint pipelines and enables coordinated decisions that better preserve model capacity under parameter reduction. Incorporating expert usage frequency into the assignment objective reduces compression-induced drift and helps maintain routing stability, making PRIME a frequency-aware and empirically effective approach to MoE compression.

\clearpage
\section*{Limitations}
Our work has several limitations. First, PRIME optimizes each layer independently, without modeling cross-layer dependencies in expert utilization. A globally coordinated multi-layer formulation could potentially yield better compression decisions. Second, although we use a mixed calibration corpus (C4, Evol-CodeAlpaca, NuminaMath) to improve coverage, the optimal calibration mixture may depend on the target deployment domain, and we leave a systematic study of dataset mixing ratios to future work. Third, our evaluation focuses on standard English-centric benchmarks and may not capture performance variation across multilingual, safety-critical, or interactive settings. As a result, compressed models should be re-evaluated before deployment in specialized applications. Finally, PRIME is a one-shot post-training method without fine-tuning recovery, which may limit achievable compression ratios compared to methods that allow retraining.

\section*{Ethical Considerations}
This work focuses on structural compression of pre-trained MoE models via mathematical optimization. It does not involve human subjects, sensitive data, or generation of harmful content. The calibration and evaluation datasets used in this study are publicly available resources; users of PRIME should comply with the licenses and terms of the original models and datasets, and any released compressed checkpoints should preserve the attribution, use restrictions, and redistribution requirements of the corresponding source checkpoints. The compressed models inherit the biases and limitations of the original pre-trained models; practitioners should apply standard safety evaluations before deployment. Our method reduces computational costs, which may contribute to more equitable access to large language models.

\begingroup
\setlength{\emergencystretch}{3em}
\tolerance=2000
\bibliographystyle{icml2026}
\bibliography{references}
\endgroup

\clearpage
\onecolumn
\appendix


\section{Theoretical Analysis}
\label{app:theory}

In this section, we present theoretical analysis to motivate the design choices of PRIME. We show that a frequency-weighted objective yields a tighter upper bound on inference error than frequency-agnostic alternatives (justifying the role of $f_i$), and derive a sufficient condition under which high-frequency experts must be selected as centroids in the optimal solution. We note that the bound is stated in parameter space; the connection to the output-space metric used in practice is discussed in Remark~\ref{rem:metric_gap}.

\subsection{Motivating Frequency Weighting via an Error Bound}
\label{app:error_bound}
We provide a theoretical motivation for the frequency-weighted structure of our objective by showing that minimizing the frequency-weighted parameter distance minimizes an upper bound on the expected layer-wise inference error. We emphasize that this analysis justifies the weighting scheme, not the specific distance metric used in practice (see Remark~\ref{rem:metric_gap}).
\paragraph{Notations}
\begin{itemize}
    \item Let $x \in \mathbb{R}^d$ be the input vector drawn from the data distribution $\mathcal{D}$.
    \item The MoE layer output is formulated as $y(x) = \sum_{i=1}^N g_i(x) E(x; w_i)$, where $N$ denotes the total number of experts. $g_i(x) \in [0, 1]$ represents the gating probability for the $i$-th expert, satisfying the simplex constraint $\sum_{i=1}^N g_i(x) = 1$. $E(\cdot; w_i)$ is the function of the $i$-th expert parameterized by weights $w_i$.
    \item The compressed output is defined as $\hat{y}(x) = \sum_{i=1}^N g_i(x) E(x; w_{\pi(i)})$, where the assignment function $\pi: \{1, \dots, N\} \to \{1, \dots, N\}$ maps each expert $i$ to its assigned cluster, and $w_{\pi(i)}$ denotes the aggregated parameters of the merged experts within that cluster.
    \item Let $f_i = \mathbb{E}_{x \sim \mathcal{D}}[g_i(x)]$ denote the global activation frequency of the $i$-th expert over the distribution $\mathcal{D}$.
\end{itemize}

\textbf{Assumption 1 (Parameter-Space Lipschitz Continuity).}
We assume the expert function $E(x; w)$ is $L$-Lipschitz continuous with respect to its parameters $w$, uniformly over bounded input and parameter domains. That is, there exists a constant $L > 0$ such that for any input $x \in \mathcal{X}$ and parameter configurations $w, w' \in \mathcal{W}$:
\begin{equation}
    \|E(x; w) - E(x; w')\|_2 \le L \|w - w'\|_2.
\end{equation}
\textit{Justification:} This assumption is grounded in the first-order Taylor expansion of the neural network function. Analyzing the sensitivity of network outputs to parameter perturbations via a linear bound, i.e. Lipschitz condition is a standard practice in generalization theory~\citep{neyshabur2017exploring, jiang2019fantastic} and model compression~\citep{lecun1989optimal}. 

\begin{theorem}[Frequency-Weighted Error Bound]\label{th:error_bound}
Under Assumption 1, the expected inference error $\mathcal{L}_{err}$ is upper-bounded by the frequency-weighted parameter distance:
\[
\mathcal{L}_{err} \le L \sum_{i=1}^N f_i \|w_i - w_{\pi(i)}\|_2.
\]
Consequently, minimizing the frequency-weighted objective minimizes this upper bound. When activation frequencies are non-uniform, this bound is tighter than the frequency-agnostic bound $L \sum_i \|w_i - w_{\pi(i)}\|_2$.
\end{theorem}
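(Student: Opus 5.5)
We take the expected inference error to be $\mathcal{L}_{err} = \mathbb{E}_{x\sim\mathcal{D}}\,\|y(x)-\hat y(x)\|_2$. The proof is a direct chain: triangle inequality, then Assumption 1 pointwise, then linearity of expectation. Writing the difference expert-by-expert gives
\[
y(x)-\hat y(x) = \sum_{i=1}^N g_i(x)\bigl(E(x;w_i)-E(x;w_{\pi(i)})\bigr).
\]
Because $g_i(x)\ge 0$, the triangle inequality yields $\|y(x)-\hat y(x)\|_2 \le \sum_i g_i(x)\,\|E(x;w_i)-E(x;w_{\pi(i)})\|_2$. Applying Assumption 1 to each term bounds this by $L\sum_i g_i(x)\,\|w_i-w_{\pi(i)}\|_2$.

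The crucial point is that $w_i$ and $w_{\pi(i)}$ do not depend on $x$. Taking expectations and using linearity therefore gives
\[
\mathcal{L}_{err} \le L\sum_i \mathbb{E}[g_i(x)]\,\|w_i-w_{\pi(i)}\|_2 = L\sum_i f_i\,\|w_i-w_{\pi(i)}\|_2 .
\]
The statement that minimizing the weighted objective minimizes this bound is then immediate, since $L$ is a constant independent of the assignment $\pi$ and the aggregated weights.

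For the comparison with the frequency-agnostic bound, note that $g_i(x)\in[0,1]$ and $\sum_i g_i(x)=1$. Hence $f_i\in[0,1]$ and $\sum_i f_i=1$. Writing $a_i=\|w_i-w_{\pi(i)}\|_2\ge 0$, we get $\sum_i f_i a_i\le \sum_i a_i$ termwise. The inequality is strict as soon as some $a_i>0$ has $f_i<1$. With $N\ge 2$ experts this fails only in the degenerate case where all routing mass sits on one expert or no expert is altered. I would therefore state the strictness condition explicitly as ``some altered expert has $f_i<1$.''

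I would also add a remark addressing a natural objection: the unweighted bound carries a factor of $N$ relative to the average. The fairer comparison is against $\frac{L}{N}\sum_i a_i$, i.e., uniform weights $1/N$, and that quantity is \emph{not} a valid upper bound in general. This observation is itself the substantive point: the weighted bound is the valid one that accounts for usage, and it coincides with the uniform bound exactly when the $f_i$ are uniform.

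The main obstacle is the domain bookkeeping required by Assumption 1. The Lipschitz constant is only asserted on bounded sets $\mathcal{X}$ and $\mathcal{W}$, so I must check two things.
\begin{itemize}
\item $x\in\mathcal{X}$ holds almost surely under $\mathcal{D}$. I would simply assume that $\mathcal{D}$ is supported in $\mathcal{X}$.
\item The aggregated parameters $w_{\pi(i)}$ lie in $\mathcal{W}$. By Eq.~\ref{eq:merge}, each merged parameter is a convex combination of the original weights. It therefore suffices to take $\mathcal{W}$ convex and containing all $w_i$, for instance the convex hull of the $w_i$ or a norm ball around them.
\end{itemize}
Pruned experts need a convention, since $\pi(i)$ is undefined for them. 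I would set $E(x;w_{\pi(i)})=0$ for pruned $i$, so that their term contributes $f_i\|E(x;w_i)\|_2\le f_iB$ for a uniform bound $B$ on expert outputs. This term matches the $\lambda f_i y_i$ penalty in Eq.~\ref{eq:objective}. Alternatively, I would note explicitly that the theorem as stated covers merge-only assignments and that pruning enters through this additive term.
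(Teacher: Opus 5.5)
Your proof is correct and follows the paper's argument step for step: the triangle inequality using $g_i(x)\ge 0$, the $x$-uniform Lipschitz bound of Assumption~1, linearity of expectation to produce $f_i=\mathbb{E}[g_i(x)]$, and then the termwise comparison $\sum_i f_i a_i\le\sum_i a_i$ from $f_i\le 1$. Your additions refine the same route. The condition ``some altered expert has $f_i<1$'' is the accurate version of the paper's claim that non-uniform frequencies give strict tightness for any $\pi$: equality holds when no expert is altered, and uniform $f_i=1/N$ with $N\ge 2$ already gives strictness. Your domain check and pruning convention address points the paper's proof passes over, since its $\pi$ only models merge-type assignments.
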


\begin{proof}
We define the expected error as the expected $L_2$ norm of the output difference:
\begin{equation}
    \mathcal{L}_{err} = \mathbb{E}_{x} \left[ \| y(x) - \hat{y}(x) \|_2 \right]
\end{equation}
Substituting the MoE definition and applying the Triangle Inequality:
\begin{align}
    \mathcal{L}_{err} &= \mathbb{E}_{x} \left[ \left\| \sum_{i=1}^N g_i(x) (E(x; w_i) - E(x; w_{\pi(i)})) \right\|_2 \right] \\
    &\le \sum_{i=1}^N \mathbb{E}_{x} \left[ g_i(x) \| E(x; w_i) - E(x; w_{\pi(i)}) \|_2 \right]
\end{align}
Applying Assumption 1, the output difference term is bounded by $L\|w_i - w_{\pi(i)}\|_2$. Since this bound is uniform over $x$, it can be factored out of the expectation:
\begin{align}
    \mathcal{L}_{err} &\le \sum_{i=1}^N \mathbb{E}_{x} \left[ g_i(x) \cdot L \|w_i - w_{\pi(i)}\|_2 \right] \\
    &= L \sum_{i=1}^N \|w_i - w_{\pi(i)}\|_2 \cdot \underbrace{\mathbb{E}_{x}[g_i(x)]}_{\text{definition of } f_i}
\end{align}
Thus, we arrive at the final bound:
\begin{equation}
    \mathcal{L}_{err} \le L \sum_{i=1}^N f_i \|w_i - w_{\pi(i)}\|_2
\end{equation}
For the tightness claim, note that since $f_i \le 1$ and $\sum_i f_i =1$, the frequency-agnostic bound $L\sum_i \|w_i - w_{\pi(i)}\|_2$ is always valid but potentially looser. When frequencies are non-uniform (as empirically observed in MoE routing), the weighted bound is strictly tighter for any assignment $\pi$.
\end{proof}

\paragraph{Key Insight: Frequency Provides a Tighter Bound.}
The derived bound $L \sum f_i \|\Delta w\|_2$ provides theoretical justification for our objective function. Frequency-agnostic baselines minimize the unweighted distance, which corresponds to the special case of uniform activation $f_i = 1/N$.
Since MoE activation patterns are empirically highly skewed (see Appendix~\ref{app:expert_stats}), the uniform assumption is a poor fit. By explicitly incorporating $f_i$, our objective focuses the compression budget on high-frequency experts, yielding a tighter upper bound on the actual inference error when activation frequencies are non-uniform.

\begin{remark}[Gap Between Theoretical Metric and Practical Surrogate]\label{rem:metric_gap}
Theorem~\ref{th:error_bound} is stated in terms of parameter-space distance $\|w_i-w_{\pi(i)}\|_2$, whereas the ILP in Section~\ref{sec:ilp_formulation} uses $D_{ij}$, the normalized Euclidean distance between mean expert outputs on the calibration set. These are not the same quantity. The theorem's role is to justify the frequency-weighted structure of the objective---i.e., that distortions in high-frequency experts contribute proportionally more to inference error---not to prescribe the specific distance function. In practice, we adopt an output-space metric because (i) it directly measures functional divergence between experts without requiring the (unknown) Lipschitz constant $L$, and (ii) it is empirically effective, following prior work \citep{chen2025hcsmoe}. By the Lipschitz assumption, a small parameter distance implies a small output distance, but the converse need not hold; output-space proximity is therefore a necessary but not sufficient condition for parameter-space proximity. Consequently, our practical surrogate may admit assignments that the parameter-space bound would penalize more heavily, meaning the ILP optimizes a related but distinct surrogate rather than directly minimizing the theoretical bound. We view this as a standard theory-to-practice gap analogous to optimizing tractable surrogates (e.g., cross-entropy) in place of intractable objectives (e.g., 0-1 loss) in machine learning.
\end{remark}


\subsection{Sufficient Condition for Centroid Selection}
\label{app:opt_condition}
In this section, we derive a structural property of the optimal solution to our ILP formulation. Our analysis reveals a frequency-dependent sufficient condition under which an expert must be selected as a centroid.

\paragraph{Problem Setup.}
We formally state the optimization problem underlying PRIME. Given $N$ experts with pairwise distances $D_{ij} \ge 0$ (where $D_{ii} = 0$), activation frequencies $f_i > 0$, and a pruning penalty $\lambda > 0$, the ILP selects a set $S$ of exactly $K$ centroids to minimize:
\begin{equation}
\label{eq:ilp_formal}
C(S) = \sum_{i=1}^N f_i \cdot \min\!\left(\min_{c \in S} D_{ic},\; \lambda\right), \quad \text{s.t. } |S| = K.
\end{equation}
Each non-centroid expert $E_i$ is either merged to its nearest centroid (cost $f_i D_{ic}$) or pruned (cost $f_i \lambda$), whichever is cheaper.

\begin{theorem}[Sufficient Condition for Centroid Selection]\label{th:sufficient}
Let $S^*$ be the set of $K$ centroids in an optimal solution. For each centroid
$E_v \in S^*$, let $G_v$ denote its cluster, and define the dissolution cost
increment of $G_v$ as
\[
\Delta_v
:= \sum_{E_i \in G_v} f_i \left(
   \min\Bigl(\underbrace{\min_{c' \in S^* \setminus \{E_v\}} D_{ic'}}_{Reassignment},\; \underbrace{\lambda}_{Pruning} \Bigr)
   \;-\; \underbrace{D_{iv}}_{Original\; Cost} \right),
\]
and define the Minimum Cluster Dissolution Cost as
\[
\Delta_{\mathrm{min}} := \min_{E_v \in S^*} \Delta_v.
\]
If an expert $E_k$ satisfies

\begin{equation}
\label{eq:freq_anchor}
 f_k \;>\;
\frac{\Delta_{\mathrm{min}}}{\min\bigl(\min_{c \in S^*} D_{kc},\; \lambda\bigr)},
\end{equation}

 $E_k$ is guaranteed to be a centroid in the optimal solution
(i.e.\ $z_k = 1$).
\end{theorem}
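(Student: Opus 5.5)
We argue by contradiction using a single swap of centroids. Suppose $E_k$ satisfies \eqref{eq:freq_anchor} but $E_k \notin S^*$. Let $E_v \in S^*$ attain the minimum in $\Delta_{\min}$, so $\Delta_v = \Delta_{\min}$. Consider the competitor $S' := (S^* \setminus \{E_v\}) \cup \{E_k\}$. It still has $|S'| = K$, so it is feasible for \eqref{eq:ilp_formal}. We will show $C(S') < C(S^*)$, contradicting optimality.

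To compare the two costs, we work with the reduced form \eqref{eq:ilp_formal}. There each expert $i$ pays $f_i \min(\min_{c\in S} D_{ic}, \lambda)$; equivalently, it is merged into its nearest centroid or pruned. We take $G_v$ to be the experts merged into $E_v$ in the optimal solution, including $E_v$ itself with $D_{vv}=0$. Write $a_k := \min(\min_{c\in S^*} D_{kc}, \lambda)$, the current per-unit cost of $E_k$. For $i \in G_v$, write $m_i := \min(\min_{c'\in S^*\setminus\{E_v\}} D_{ic'}, \lambda)$. We then bound $C(S')$ term by term, using only feasible (not necessarily optimal) assignments under $S'$. Adding centroids never increases a min, and $E_v$ was the active choice only for members of $G_v$, so:
\begin{itemize}
\item every expert $i \notin G_v$ with $i \neq k$ pays at most its old cost;
\item every $i \in G_v$ with $i\neq k$ pays at most $f_i m_i$, by reassigning to $S^*\setminus\{E_v\}$ or pruning;
\item $E_k$ pays $0$, since $D_{kk}=0$.
\end{itemize}

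Two cases follow. If $E_k \notin G_v$ (it was merged elsewhere or pruned), summing gives $C(S') \le C(S^*) + \Delta_v - f_k a_k$. If $E_k \in G_v$, then the contribution $f_k(m_k - D_{kv})$ already sits inside $\Delta_v$, and $E_k$'s old cost was $f_k D_{kv}$. Combining these yields $C(S') \le C(S^*) + \Delta_v - f_k m_k$. Since $m_k \ge a_k$ (minimizing over a smaller set), this again gives $C(S') \le C(S^*) + \Delta_v - f_k a_k$. In both cases, \eqref{eq:freq_anchor} says $f_k a_k > \Delta_{\min} = \Delta_v$, whenever $a_k > 0$, so $C(S') < C(S^*)$, the desired contradiction. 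Hence $z_k = 1$.

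The main obstacle is the bookkeeping in the second case, where $E_k$ is itself a member of the dissolved cluster: its reassignment term must not be double-counted, and the inequality $m_k \ge a_k$ is exactly what closes it. A secondary point is degeneracy. The condition is only meaningful when $a_k>0$, i.e.\ $D_{kc}>0$ for all $c\in S^*$. If $a_k = 0$, then $E_k$ coincides in distance with a centroid and the right-hand side of \eqref{eq:freq_anchor} is undefined, so we state that assumption explicitly. One should also check that the ILP constraints \eqref{eq:self2}--\eqref{eq:self} are consistent with the reduced form \eqref{eq:ilp_formal}: centroids self-assign at zero cost and are never pruned, so the optimal ILP value equals $\min_{|S|=K} C(S)$. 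This justifies doing the swap argument on $C(S)$ directly.
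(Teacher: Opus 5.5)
Your proposal is correct and is essentially the paper's proof: the same contradiction via the single swap $S'=(S^*\cup\{E_k\})\setminus\{E_{v^*}\}$ with $v^*$ attaining $\Delta_{\min}$, weighing a gain of at least $f_k\min(\min_{c\in S^*}D_{kc},\lambda)$ against a dissolution penalty of at most $\Delta_{\min}$. Your separate handling of the case $E_k\in G_{v^*}$ (closed by $m_k\ge a_k$), together with your explicit conventions that $E_v\in G_v$, that $a_k>0$, and that the ILP optimum equals $\min_{|S|=K}C(S)$, makes rigorous bookkeeping that the paper's gain/penalty split leaves implicit.
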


\begin{proof}
Assume for contradiction that $E_k \notin S^*$ in an optimal solution $S^*$ with objective value $C(S^*)$.

Since $E_k \notin S^*$, its cost contribution in the optimal solution is
\begin{equation}
\mathrm{Cost}(E_k)
= f_k \cdot \min\bigl(\min_{c \in S^*} D_{kc},\; \lambda\bigr).
\label{eq:Ek-cost}
\end{equation}
We construct an alternative solution by (i) adding $E_k$ as a centroid, yielding $K+1$ centroids, then (ii) removing the centroid $E_{v^*}$ whose cluster dissolution incurs the least cost increase:
\[
v^* \in \arg\min_{E_v \in S^*} \Delta_v.
\]
Let $S' = (S^* \cup \{E_k\}) \setminus \{E_{v^*}\}$, so $|S'| = K$.

\textit{Cost reduction from adding $E_k$:} Making $E_k$ a centroid (with self-distance 0) eliminates its own cost, providing a reduction of at least:
\begin{equation}
\mathrm{Gain}
\ge f_k \cdot \min\bigl(\min_{c \in S^*} D_{kc},\; \lambda\bigr).
\label{eq:gain}
\end{equation}
(The gain may be larger if other experts also reassign to $E_k$, but we use this lower bound.)

\textit{Cost increase from dissolving $E_{v^*}$:} Each member of $G_{v^*}$ must reassign to its next-best centroid in $S' \setminus \{E_{v^*}\} = (S^* \setminus \{E_{v^*}\}) \cup \{E_k\}$, or be pruned. The cost increase is at most $\Delta_{v^*}$, because the availability of $E_k$ as an additional reassignment target can only reduce costs relative to the definition of $\Delta_v$ (which excludes $E_k$). Thus:
\begin{equation}
\mathrm{Penalty} \le \Delta_{v^*} = \Delta_{\mathrm{min}}.
\label{eq:penalty}
\end{equation}

Combining:
\[
C(S') - C(S^*) \le \mathrm{Penalty} - \mathrm{Gain} \le \Delta_{\mathrm{min}} - f_k \cdot \min\bigl(\min_{c \in S^*} D_{kc},\; \lambda\bigr).
\]
The condition \eqref{eq:freq_anchor} ensures the right-hand side is negative, hence $C(S') < C(S^*)$, contradicting the optimality of $S^*$.
\end{proof}

\paragraph{Interpretation: High-Frequency Experts as Natural Anchors.}
Theorem \ref{th:sufficient} provides a sufficient condition under which high-frequency experts are guaranteed to be centroids. The condition~\eqref{eq:freq_anchor} shows that as an expert's activation frequency $f_k$ grows relative to the minimum dissolution cost, the expert must be retained as a centroid to maintain optimality.
This has a practical consequence: the ILP naturally avoids merging heavily-used experts into other clusters, since doing so would incur a large cost that could be eliminated by making such experts centroids. While this does not constitute a formal load-balancing guarantee, it provides a mechanism through which the optimization tends to preserve high-traffic routing paths, contributing to the empirically observed routing stability (Section~\ref{sec:analysis}).

\clearpage
\section{Serving Performance Benefits}
\label{app:serving_benefits}

\paragraph{Scope and controls.}
We extend the analysis in Section~\ref{sec:deployment} by comparing the original 128-expert Qwen3-30B-A3B model with its merged 96- and 64-expert variants. All serving measurements are collected by running these checkpoints directly under a common configuration: vLLM v0.28.0 on A800-80GB GPUs with bf16, top-$k=8$, 90\% GPU memory utilization, a 16,384-token context limit, up to 512 active sequences, disabled prefix caching, and fixed output lengths with EOS ignored. To distinguish the effects of memory capacity, we consider three deployment arms: TP1 with automatically allocated KV cache, TP1 with the KV capacity of every variant fixed at 121,968 tokens, and TP2 with ample KV capacity (871,584 tokens for the baseline). All ratios compare a compressed variant with the 128-expert baseline in the same arm and therefore do not compare results across different GPU counts.

\begin{center}
\begin{minipage}{\linewidth}
\captionsetup{hypcap=false}
\centering
\includegraphics[width=\linewidth]{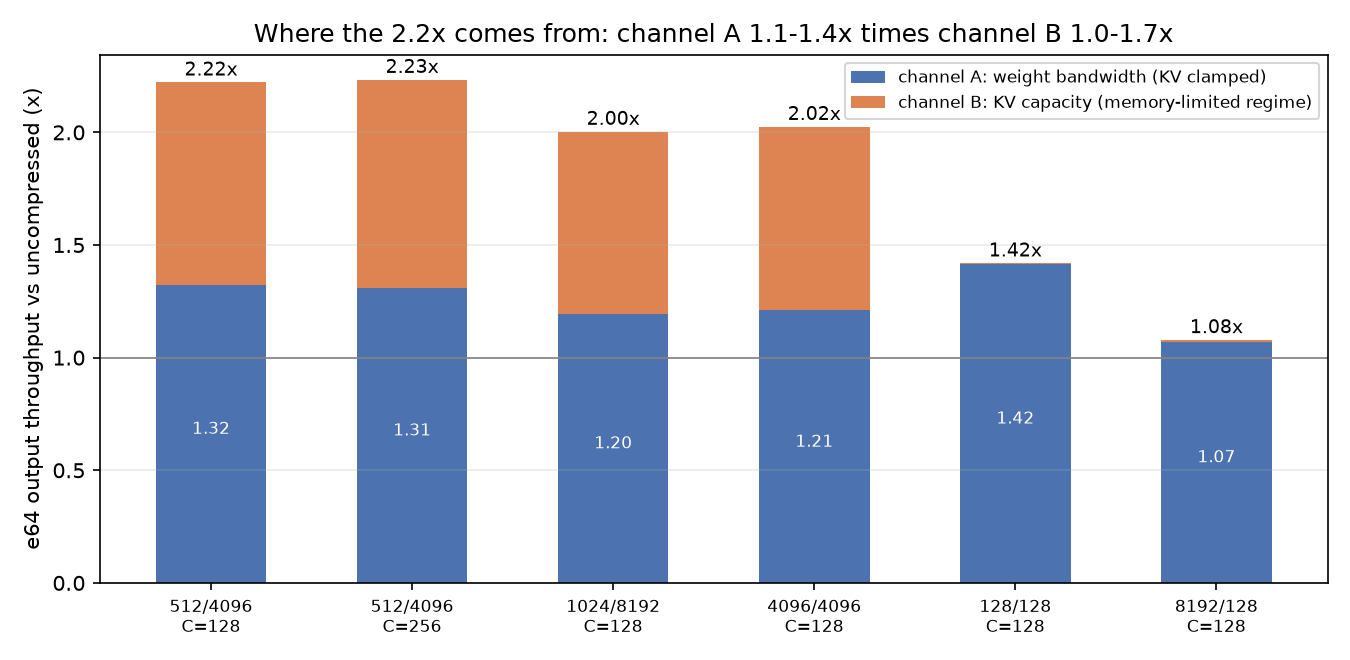}
\captionof{figure}{TP1 throughput gains of the merged 64-expert model over the original 128-expert model. Labels indicate input/output lengths and concurrency $C$. Blue bars show fixed-KV gains, whereas orange heights show the \emph{difference} between free- and fixed-KV speedups rather than the capacity multiplier itself. This decomposition is descriptive and does not establish independent causal effects.}
\label{fig:serving_attribution}
\end{minipage}
\end{center}

\paragraph{Expert-weight bandwidth.}
Although keeping eight active experts per token leaves the amount of expert computation approximately unchanged, compression allows a concurrent decode batch to touch fewer distinct experts. Under independent uniform routing, the expected number of experts touched is $E_{\mathrm{touch}}(B)=E[1-(1-8/E)^B]$, where $B$ denotes the actual decode batch size rather than the offered request concurrency. The resulting reduction in weight traffic is consistent with the controlled gains in Figures~\ref{fig:serving_attribution}--\ref{fig:serving_concurrency}. However, this estimate is only a heuristic and should not be interpreted as a physical speedup bound because routing skew, kernel efficiency, attention, and communication also affect runtime. For the 128/128 workload at $C=128$, the free- and fixed-KV gains both equal 1.42$\times$. In this case, 128 full sequences require only 32,768 KV tokens, which remains below the baseline capacity.

\paragraph{KV-cache capacity.}
On TP1, weight memory decreases from 56.93 GiB for 128 experts to 43.43 GiB for 96 experts and 29.92 GiB for 64 experts. This reduction increases the corresponding KV-cache capacity from 121,968 tokens to 268,400 and 417,008 tokens, representing gains of 2.20$\times$ and 3.42$\times$, while KV storage remains 96 KiB per token. For workloads with long outputs, the additional capacity accommodates more concurrent tokens and thereby reduces preemption. At 512/4096 and concurrency 128, for example, the gain from the 64-expert model increases from 1.32$\times$ with fixed KV to 2.22$\times$ with free KV. The ratio between these gains is 1.68, which characterizes the additional capacity-associated benefit but does not establish independent multiplicative causes. When concurrency increases to 256, throughput reaches 2.23$\times$ and the number of preemptions falls from 693 to 264. These results also show that capacity planning must account for both input and generated tokens rather than input length alone.

\newpage
\begin{center}
\begin{minipage}{\linewidth}
\captionsetup{hypcap=false}
\centering
\includegraphics[width=\linewidth]{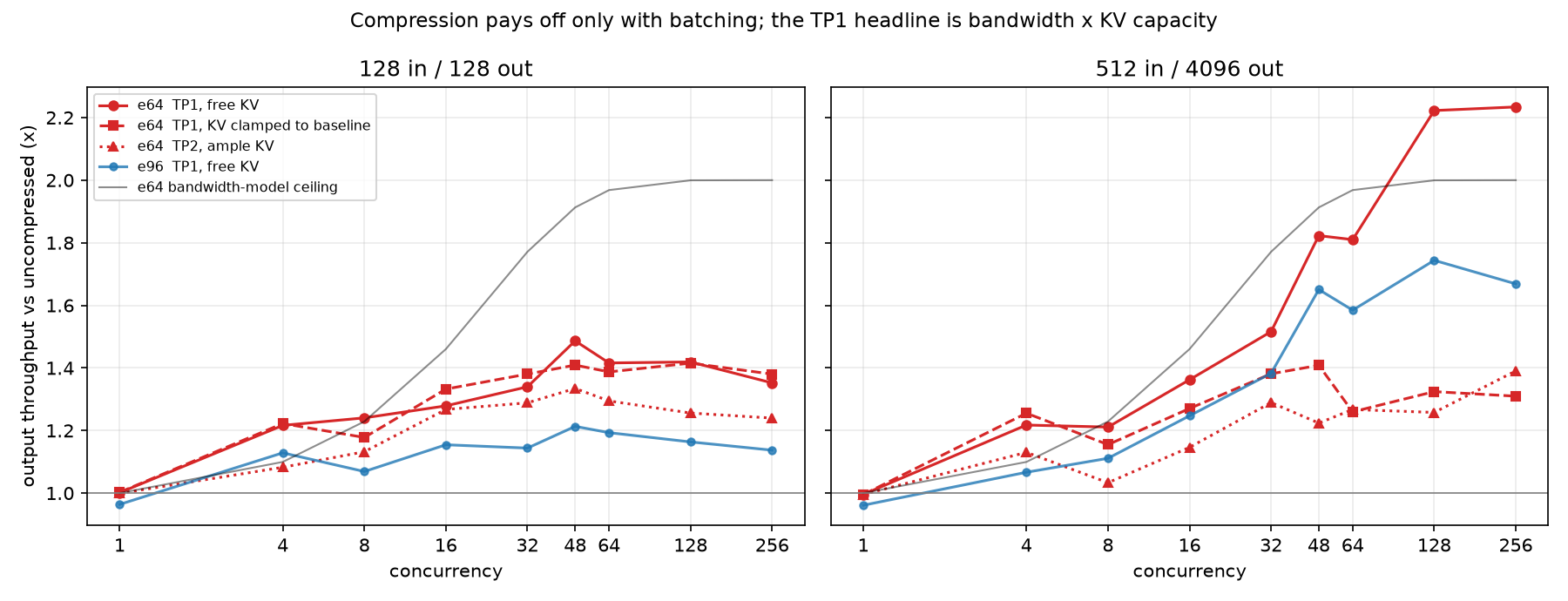}
\captionof{figure}{Serving gains versus concurrency for short outputs (left) and long outputs (right). ``e64/e96'' denotes the merged 64-/96-expert models, and each curve is normalized within its deployment arm. As KV pressure increases, the long-output TP1 curve separates from the controls. The gray ``ceiling'' represents a uniform-routing bandwidth estimate rather than a rigorous speedup bound.}
\label{fig:serving_concurrency}
\end{minipage}
\end{center}

\paragraph{Deployment implications and limits.}
As shown in Figure~\ref{fig:serving_concurrency}, the largest gains occur for concurrent, memory-constrained workloads with long outputs. At 512/4096 and $C=128$, the 64-expert model achieves a 2.22$\times$ gain on TP1, compared with 1.32$\times$ under fixed KV capacity and 1.26$\times$ on TP2. Under the same TP1 setting, the 96-expert model achieves a 1.74$\times$ gain. By contrast, single-request decoding shows no meaningful improvement for the 64-expert model and a 3--4\% slowdown for the 96-expert model, while performance for prefill-dominated serving remains near unity. Throughput under admission pressure should also be considered alongside time to first token because a low time per output token may simply reflect fewer admitted requests. Selected repeat runs show throughput variations of approximately 1.5\% for the baseline and 64-expert models and 5\% for the 96-expert model. Overall, the attainable speedup depends strongly on the workload and the degree of memory pressure.

\section{Empirical Analysis of Expert Utilization}
\label{app:expert_stats}

We analyze the expert routing statistics of Qwen3-30B-A3B on the calibration dataset. Figure \ref{fig:expert_dist} presents the activation frequency distribution and its corresponding Cumulative Distribution Function (CDF).

\begin{figure}[htbp]
    \centering
    \includegraphics[width=0.8\linewidth]{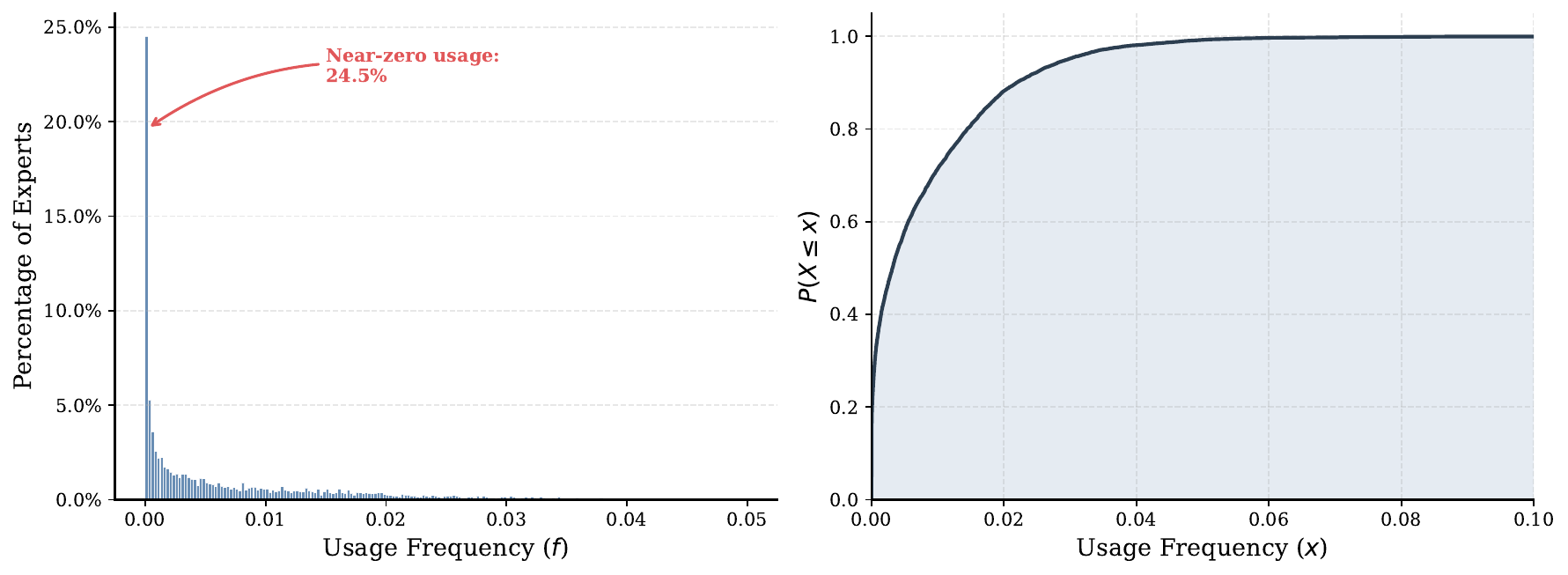} 
    \caption{Expert activation frequency distribution on Qwen3-30B-A3B. Left: histogram; Right: CDF.}
    \label{fig:expert_dist}
\end{figure}

The histogram in Figure \ref{fig:expert_dist} (Left) illustrates a pronounced long-tail distribution of expert utilization. A significant portion of the expert population is rarely activated, with statistics indicating that 24.5\% of experts fall into the ``Near-zero usage'' category. 
This observation provides a critical insight into the compression dynamics observed in our experiments, particularly explaining why the performance gap between pruning and merging diminishes at high compression rates. According to our frequency-weighted merging formulation (Eq.~\ref{eq:merge}), when an expert $E_i$ has a negligible activation frequency ($f_i \approx 0$), its contribution to the aggregated centroid weights is mathematically insignificant. Consequently, merging such a low-frequency expert into a high-frequency centroid results in a parameter composition that is virtually identical to the centroid itself. In this regime, merging effectively degenerates into pruning, as the low-frequency expert's information is diluted to the point of vanishing during the weighted aggregation.
The CDF curve in Figure \ref{fig:expert_dist} (Right) further quantifies this activation sparsity. The steep initial rise indicates that a vast number of experts share this low-utility characteristic. PRIME's frequency-aware objective (Eq.~\ref{eq:objective}) naturally exploits this distribution by assigning minimal weight to these near-zero experts. This design enables the ILP solver to flexibly process them via pruning or merging with minimal penalty, ensuring that the model's primary inference pathways remain uncompromised.

\section{Impact of Penalty Threshold}
\label{app:sensitivity}

We conduct experiments on Qwen3-30B-A3B model to investigate the impact of penalty threshold. Theoretically, the penalty threshold $\lambda$ governs the trade-off between expert pruning and merging by controlling the cost of pruning. Smaller values reduce the penalty for expert removal and thus favor pruning when similarity is low, whereas larger values discourage pruning and favor expert merging under the global parameter constraint.

\begin{figure}[htbp]
    \centering
    \includegraphics[width=0.7\linewidth]{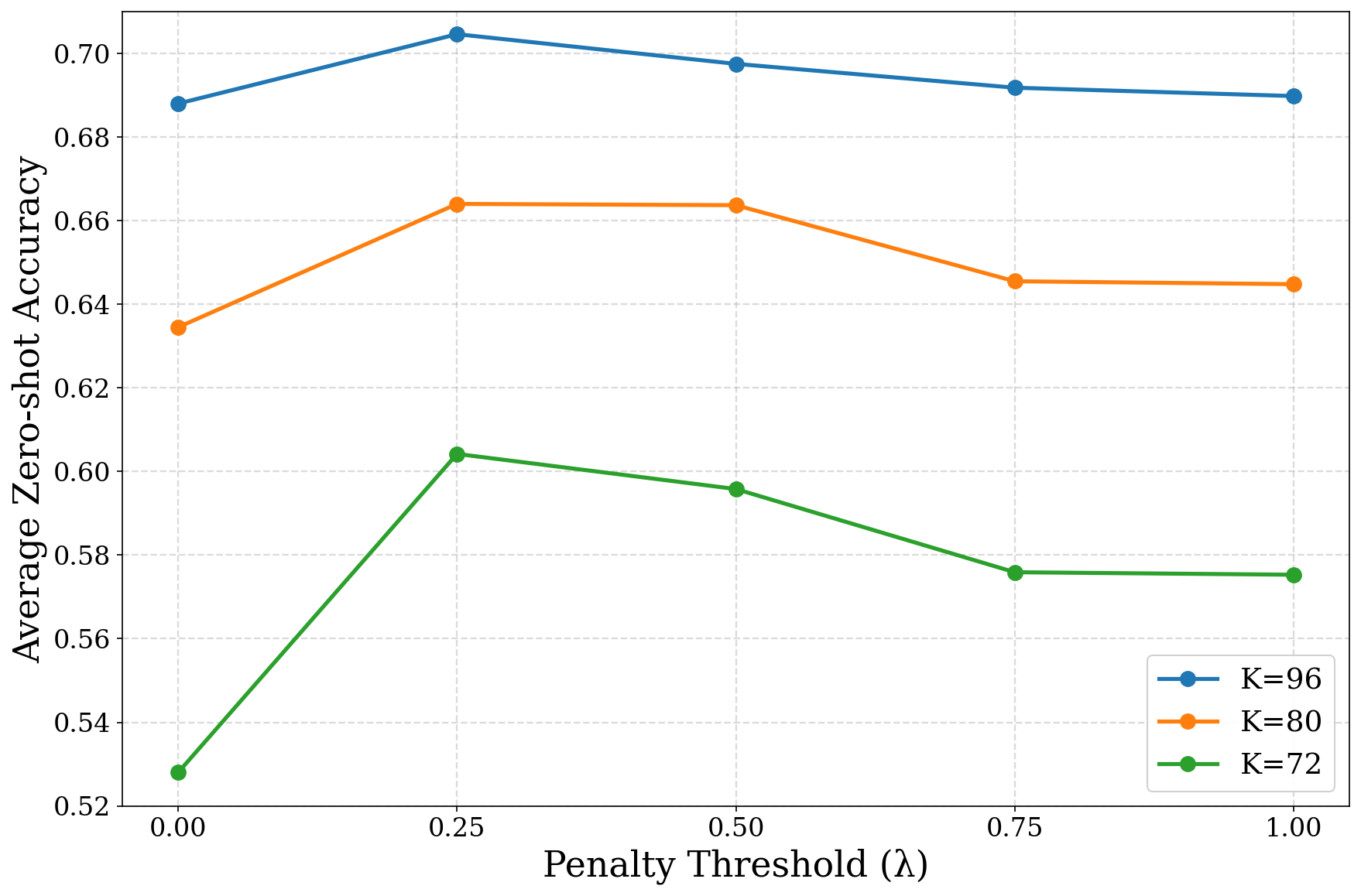}
    \caption{Impact of penalty threshold $\lambda$ on zero-shot performance across compression levels.}
    \label{fig:lambda_sensitivity}
\end{figure}

Figure \ref{fig:lambda_sensitivity} shows the effect of $\lambda$ on zero-shot accuracy with selected MC tasks under different sparsity levels ($K \in \{96, 80, 72\}$). We use this subset as a lightweight proxy for hyperparameter selection, as exhaustively sweeping $\lambda$ across all evaluation dimensions would be prohibitively expensive. Extreme settings consistently lead to inferior generalization on zero-shot tasks. When $\lambda=0$, the method collapses to prune-only and suffers a sharp performance drop, particularly under strict constraints, indicating that aggressive pruning removes task-critical experts in addition to redundant ones. Conversely, as $\lambda$ approaches $1$, the solver is forced into a merge-only regime, where experts are merged regardless of geometric compatibility. Although more parameters are retained, this induces feature interference by collapsing functionally dissimilar experts, leading to degraded representations and reduced accuracy.

Across all sparsity levels, performance peaks at $\lambda=0.25$, indicating that moderate pruning combined with selective merging yields the best trade-off. This configuration effectively filters structural outliers to mitigate feature interference, striking a robust balance between information retention and representational coherence. Consequently, we adopt $\lambda=0.25$ as the default setting.

\section{t-SNE visualization of Qwen3-30B-A3B}
\label{app:tsne_qwen3}

Figure~\ref{fig:tsne_qwen3} presents the t-SNE projection of the expert parameter manifold for Qwen3-30B-A3B. In the main text, we prioritized the visualization of Qwen1.5-MoE to ensure visual clarity, as its smaller expert population allows for a more legible depiction of the manifold structure. However, given that Qwen3-30B-A3B serves as the primary backbone for our analysis, we provide its corresponding visualization here to verify the scalability and consistency of our observations.

Despite the significantly increased density of experts in Qwen3, the structural behaviors remain consistent with those observed in the smaller model. The baseline method HC-SMoE (left) continues to exhibit severe clustering issues, where greedy frequency-agnostic heuristics force a large number of heterogeneous experts into a few massive, over-concentrated aggregations. This excessive merging results in feature dilution, as the distinct representations of diverse experts are collapsed into a single centroid. In contrast, PRIME (right) successfully maintains a spatially balanced and well-distributed manifold even under this higher expert density. Furthermore, the red crosses demonstrate the robustness of our unified pruning-and-merging framework which identifies geometrically isolated experts that are unsuitable for merging and strategically prunes them, avoiding forcing outliers into ill-fitted clusters as observed in baselines.

\begin{figure}[htbp]
\centering
\includegraphics[width=0.7\linewidth]{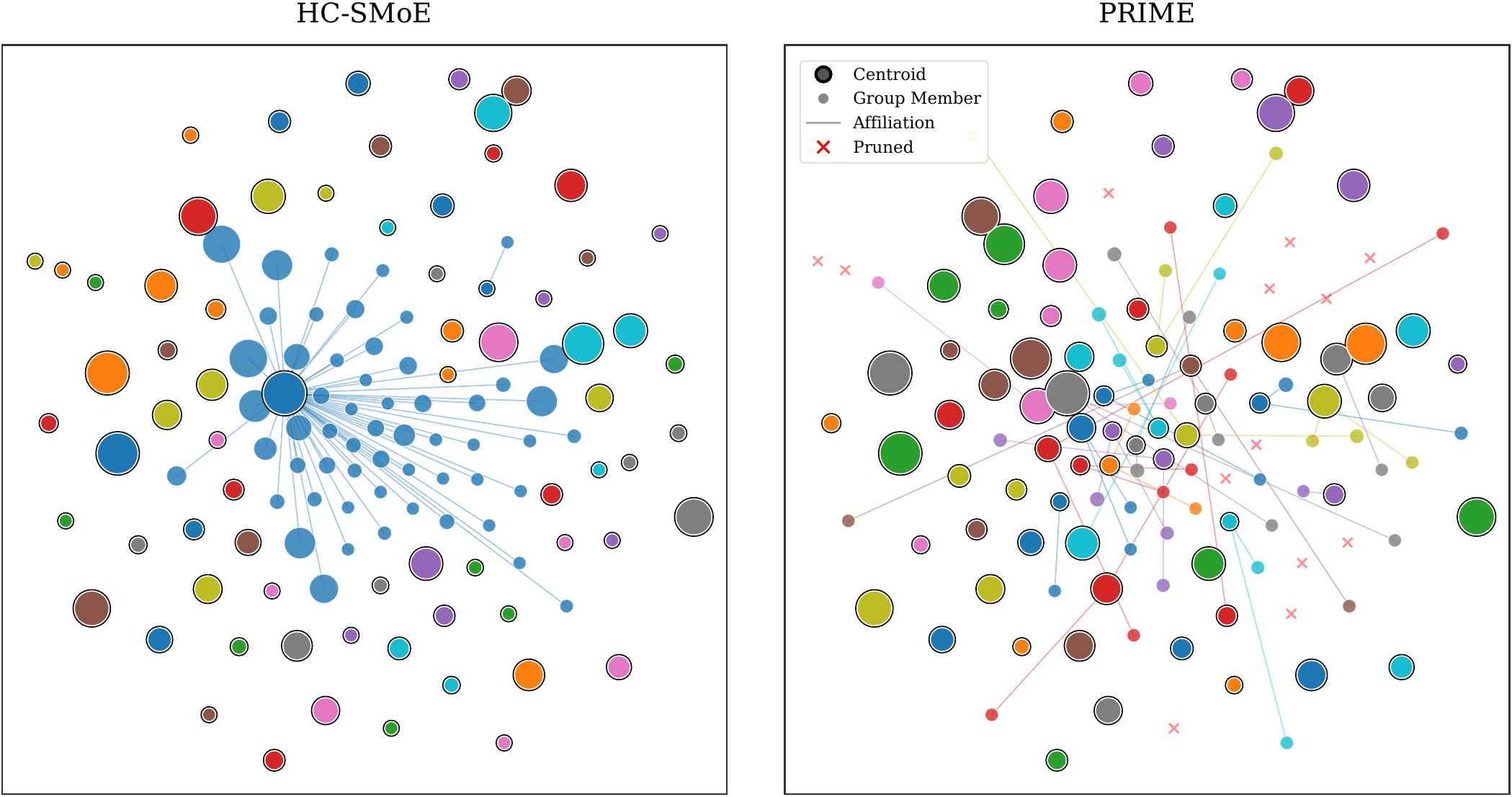} 
\caption[t-SNE visualization of Qwen3-30B-A3B.]{%
t-SNE visualization of Qwen3-30B-A3B.}
\label{fig:tsne_qwen3}
\end{figure}

\section{Routing Stability on Downstream Reasoning Tasks}
\label{app:routing_obqa}
\begin{figure}[htbp]
\centering
\begin{subfigure}[t]{0.32\textwidth}
\centering
\includegraphics[width=\linewidth]{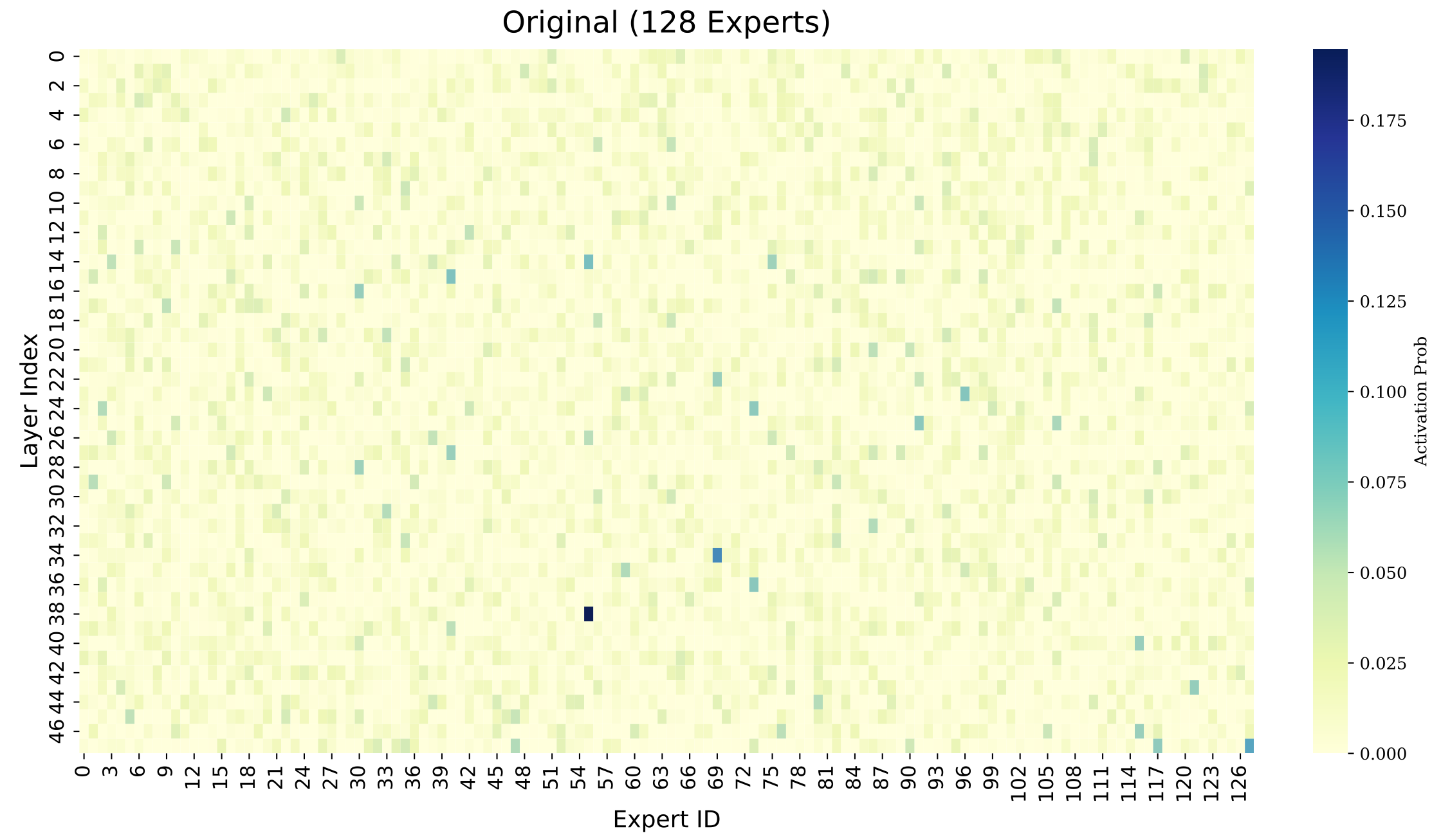}
\caption{Original}
\end{subfigure}
\hfill
\begin{subfigure}[t]{0.32\textwidth}
\centering
\includegraphics[width=\linewidth]{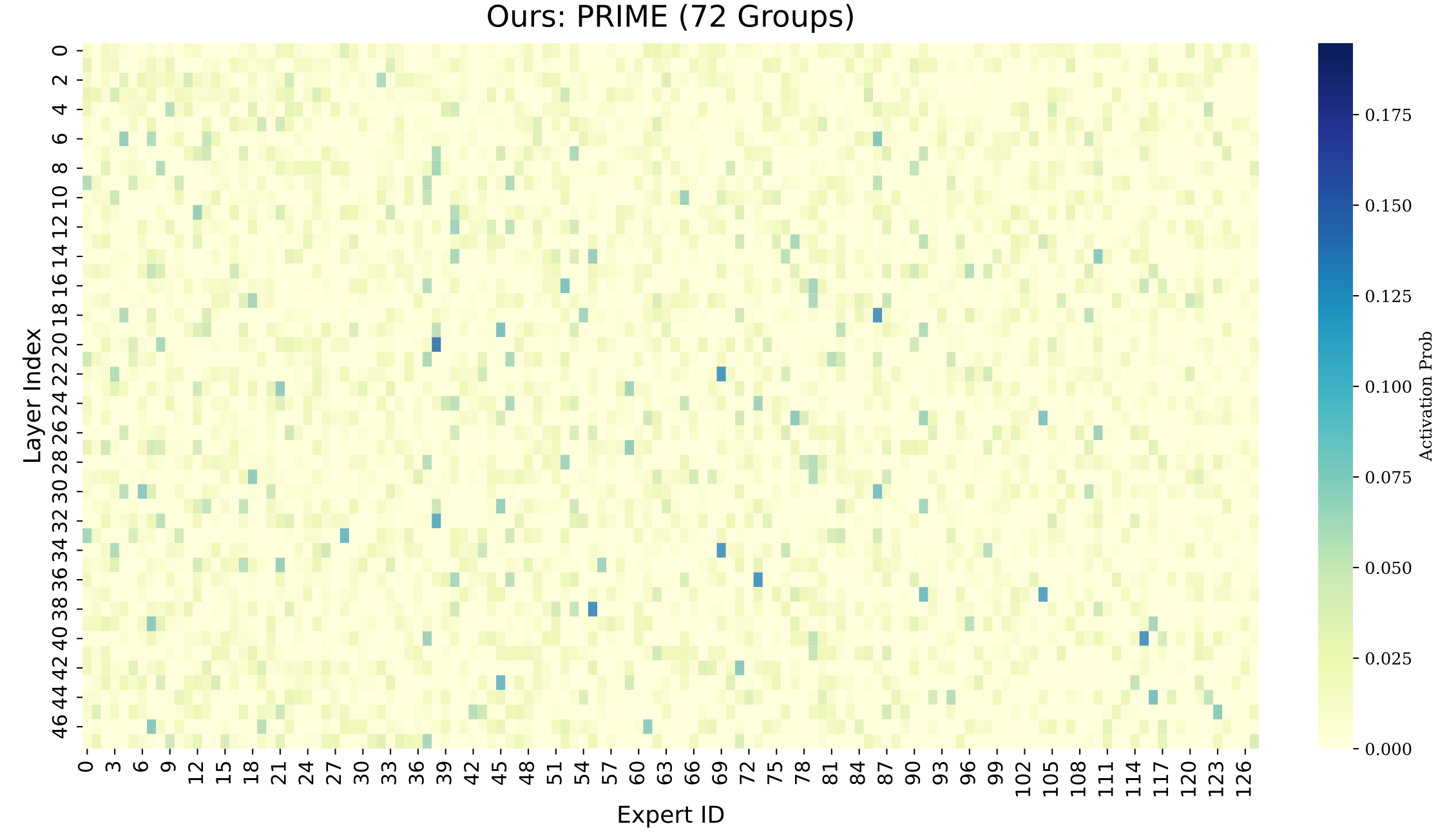}
\caption{PRIME}
\end{subfigure}
\hfill
\begin{subfigure}[t]{0.32\textwidth}
\centering
\includegraphics[width=\linewidth]{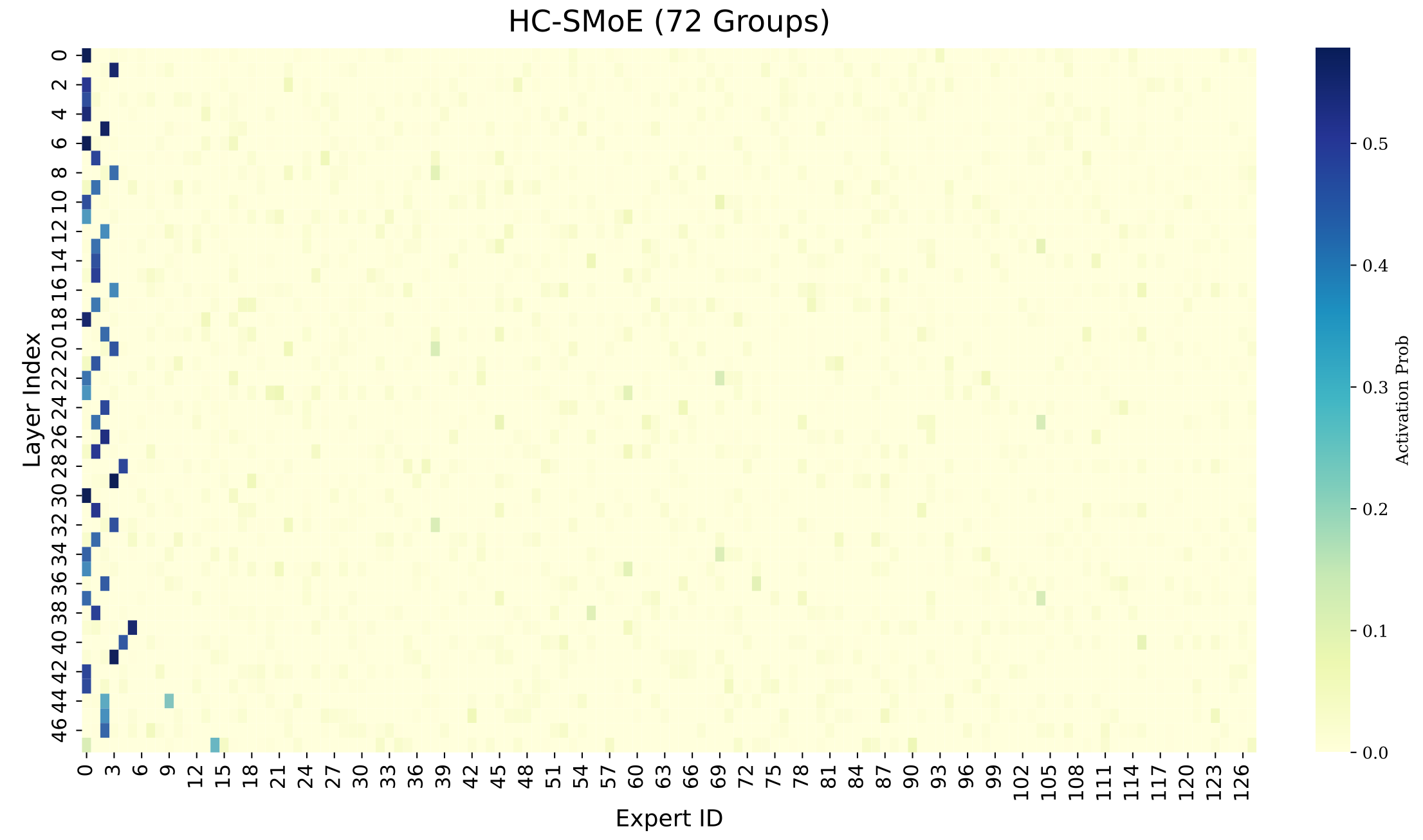}
\caption{HC-SMoE}
\end{subfigure}
\caption{Expert routing heatmaps on OpenBookQA (Qwen3-30B-A3B). Left: Original model; Middle: PRIME; Right: HC-SMoE.}
\label{fig:routing_topology_obqa}
\end{figure}

Our calibration and initial routing heatmap analysis in Section~\ref{sec:analysis} are both based on the calibration dataset, thus it is necessary to verify whether PRIME can generalize to downstream tasks rather than overfitting to the calibration statistics. Figure~\ref{fig:routing_topology_obqa} visualizes expert activation heatmaps on the OpenBookQA benchmark using Qwen3-30B-A3B. Overall, we observe similar patterns as those in Section~\ref{sec:analysis}.

HC-SMoE (right) exhibits severe routing collapse, visually manifested as static dark vertical banding. This pattern indicates that traffic is disproportionately funneled into a limited subset of experts, leading to severe load imbalance.

In contrast, PRIME (middle) closely aligns with the routing distribution of the original model (left) on downstream tasks. Even on this out-of-distribution task, PRIME still preserves the activation patterns of the uncompressed model. This consistency confirms that our frequency-aware global optimization achieves robust generalization beyond the calibration dataset.

\section{Effectiveness on Mixtral}
\label{app:mixtral}

\begin{table}[htbp]
\centering
\caption{Performance Comparisons on Mixtral-8$\times$7B. $K$ = retained experts per layer.}
\label{tab:main_results_mixtral}
\begin{small}
\begin{sc}
\resizebox{\textwidth}{!}{%
\begin{tabular}{c l cccc c c ccc c}
\toprule
$K$ & Method & BoolQ & MMLU & OBQA & RTE & MC Avg & Longbench & HE & MBPP & Coding Avg & GSM8K \\
\midrule
8 & Original & 85.05 & 67.12 & 35.00 & 71.12 & 64.57 & 35.25 & 41.46 & 60.85 & 51.15 & 70.43 \\
\midrule
\multirow{3}{*}{6} & HC-SMoE & 85.54 & 60.43 & 32.40 & 67.15 & 61.38 & \textbf{34.38} & 40.24 & 55.29 & 47.77 & 68.69 \\
& REAP & 85.60 & \textbf{63.08} & 31.40 & 68.95 & 62.26 & 30.25 & \textbf{40.85} & 57.41 & 49.13 & 65.88 \\
& \textbf{PRIME (Ours)} & \textbf{86.27} & 62.54 & \textbf{33.80} & \textbf{71.48} & \textbf{63.52} & 34.13 & \textbf{40.85} & \textbf{60.05} & \textbf{50.45} & \textbf{69.67} \\
\bottomrule
\end{tabular}
}
\end{sc}
\end{small}
\end{table}

We further evaluate PRIME on Mixtral-8$\times$7B to verify its efficacy on coarse-grained MoE architectures.
As presented in Table~\ref{tab:main_results_mixtral}, PRIME achieves the strongest aggregate performance among the compressed methods, while exhibiting some metric-specific trade-offs.
When retaining 6 experts (25\% reduction), PRIME achieves the highest MC average of 63.52, surpassing REAP (62.26) and HC-SMoE (61.38).
PRIME also leads on coding tasks (Coding Avg 50.45 vs.\ 49.13 for REAP and 47.77 for HC-SMoE) and mathematical reasoning (GSM8K 69.67 vs.\ 65.88 for REAP and 68.69 for HC-SMoE).
These results confirm that the proposed unified optimization framework remains effective even in regimes where expert redundancy is scarce, with only 8 experts per layer.

\section{Per-Layer Pruned Expert Profiling}
\label{app:pruning_profile}

\begin{table}[htbp]
\caption{Per-layer pruning vs.\ merging profile on Qwen3-30B-A3B ($K=96$).}
\label{tab:layer_profile}
\centering
\begin{small}
\begin{tabular}{l cc l}
\toprule
{\sc Layer} & {\sc Pruned} & {\sc Merged} & {\sc Pattern} \\
\midrule
0--5   & 10.5 & 21.5 & Mixed (merge-dominant) \\
6--11  & 18.3 & 13.7 & Mixed (prune-dominant) \\
12--23 & 9.1  & 22.9 & Merge-dominant \\
24--35 & 4.4  & 27.6 & Strongly merge-dominant \\
36--47 & 6.3  & 25.7 & Merge-dominant \\
\midrule
Total & $\approx$410 & $\approx$1126 & 73\% merged, 27\% pruned \\
\bottomrule
\end{tabular}
\end{small}
\end{table}

To understand how PRIME allocates its pruning and merging budget across layers, we profile the per-layer decisions on Qwen3-30B-A3B at $K=96$ (32 experts removed per layer). Table~\ref{tab:layer_profile} reports the average number of pruned vs.\ merged experts across layer groups.

Overall, 73\% of removed experts are merged and 27\% are pruned. The ILP adaptively selects pruning when expert pairwise distances are large (i.e., merging would introduce harmful representational averaging) and selects merging when experts are geometrically similar (preserving information through consolidation). This layer-adaptive mixture emerges naturally from the optimization and motivates the unified formulation: a merge-only strategy would force harmful averaging in layers with dissimilar experts (e.g., layers 6--11), while a prune-only strategy would discard recoverable information in layers where experts are highly similar (e.g., layers 24--35).

\section{Compression Overhead Comparison}
\label{app:overhead_comparison}

\begin{table}[htbp]
\caption{End-to-end compression overhead comparison on A800 GPUs. ``Route'' denotes router profiling on calibration data; ``Collect'' denotes expert output collection; ``Distance'' denotes pairwise distance computation; ``Solve/Cluster'' denotes the optimization or clustering step; ``Merge'' denotes weight averaging.}
\label{tab:overhead_comparison}
\centering
\begin{small}
\begin{sc}
\resizebox{\textwidth}{!}{%
\begin{tabular}{l l ccccc c}
\toprule
Model & Method & Route & Collect & Distance & Solve/Cluster & Merge & Total \\
\midrule
\multirow{3}{*}{\shortstack[l]{Qwen3-30B-A3B\\(128 experts)}}
& PRIME & 62s & 66s & 38s & 42s & 0.13s & 3.5min \\
& REAP & 66s & --- & --- & {<}0.01s & --- & 1.1min \\
& HC-SMoE & 64s & 69s & 43s & 993s & 0.15s & 19.5min \\
\midrule
\multirow{3}{*}{\shortstack[l]{Qwen3.5-122B-A10B\\(256 experts)}}
& PRIME & 295s & 303s & 46s & 288s & 0.72s & 15.5min \\
& REAP & 298s & --- & --- & 0.03s & --- & 5.0min \\
& HC-SMoE & 292s & 307s & 42s & 7920s & 0.77s & 142.7min \\
\bottomrule
\end{tabular}
}
\end{sc}
\end{small}
\end{table}

Table~\ref{tab:overhead_comparison} presents the end-to-end compression overhead of all methods profiled on A800 GPUs. REAP, as a pruning-only method that ranks experts solely by router-weighted activation norms without iterative optimization or weight merging, achieves the lowest overhead (1.1 minutes on 128 experts and 5.0 minutes on 256 experts). PRIME requires moderately more time due to the additional stages of expert output collection, pairwise similarity computation, and ILP solving, totaling 3.5 minutes on Qwen3-30B-A3B (128 experts) and 15.5 minutes on Qwen3.5-122B-A10B (256 experts). HC-SMoE is substantially slower, with its hierarchical clustering step consuming 993 seconds (20.7s/layer) at 128 experts and 7920 seconds (165s/layer) at 256 experts, resulting in a total overhead of 142.7 minutes---over 9$\times$ that of PRIME.

We emphasize that MoE compression is a \emph{one-time offline procedure} performed before deployment; the compressed model is then served indefinitely. From this perspective, PRIME's 15.5-minute overhead even on the largest 256-expert model is entirely practical. In exchange for approximately 10 additional minutes relative to REAP, PRIME achieves superior aggregate accuracy across most benchmarks and compression settings (cf.\ Tables in Section~\ref{sec:mainresults}), making this marginal cost trivially amortized over the serving lifetime of the model. Conversely, HC-SMoE's 142.7-minute overhead at the 256-expert scale---nearly 10$\times$ that of PRIME---becomes a meaningful practical barrier, particularly when iterating over compression configurations during development.

\end{document}